%% file: deva.tex
\documentclass{article}

\usepackage{microtype}
\usepackage{graphicx}
\usepackage{subcaption}
\usepackage{booktabs} 
\usepackage{tabularx} 
\usepackage{hyperref}
\usepackage{array}
\newcolumntype{C}[1]{>{\centering\arraybackslash}p{#1}}

\usepackage[accepted]{icml2026}

\usepackage{amsmath}
\usepackage{amssymb}
\usepackage{mathtools}
\usepackage{amsthm}
\usepackage{thmtools}
\usepackage{thm-restate}
\usepackage{etoc}

\usepackage[capitalize,noabbrev]{cleveref}

\theoremstyle{plain}
\newtheorem{theorem}{Theorem}[section]

\newtheorem{corollary}[theorem]{Corollary}
\theoremstyle{definition}
\newtheorem{definition}[theorem]{Definition}
\newtheorem{example}[theorem]{Example}
\newtheorem{assumption}[theorem]{Assumption}
\theoremstyle{remark}
\newtheorem{remark}[theorem]{Remark}

\usepackage[textsize=tiny]{todonotes}

\icmltitlerunning{Decoupled Variance Adaptation}

\input{commands}

\makeatletter
\def\addcontentsline#1#2#3{%
  \addtocontents{#1}{\protect\contentsline{#2}{#3}{\thepage}{section*.\thepage}\@empty}%
}
\makeatother

\addtocontents{toc}{\protect\setcounter{tocdepth}{-1}}

\begin{document}

\twocolumn[
  \icmltitle{Decoupling Variance and Scale-Invariant Updates in Adaptive Gradient Descent for Unified Vector and Matrix Optimization}



  \icmlsetsymbol{equal}{*}

  \begin{icmlauthorlist}
    \icmlauthor{Zitao Song}{yyy}
    \icmlauthor{Cedar Site Bai}{yyy}
    \icmlauthor{Zhe Zhang}{comp}
    \icmlauthor{Brian Bullins}{yyy}
    \icmlauthor{David F. Gleich}{yyy}
  \end{icmlauthorlist}

  \icmlaffiliation{yyy}{Department of Computer Science, Purdue University, West Lafayette, USA}
  \icmlaffiliation{comp}{Edwardson School of Industrial Engineering, Purdue University, West Lafayette, USA}

  \icmlcorrespondingauthor{Zitao Song}{song903@purdue.edu}
  \icmlcorrespondingauthor{David F. Gleich}{dgleich@purdue.edu}

  \icmlkeywords{Machine Learning, ICML}

  \vskip 0.3in
]



\printAffiliationsAndNotice{}  

\begin{abstract}

  Adaptive methods like Adam have become the \textit{de facto} standard for large-scale vector and Euclidean optimization due to their coordinate-wise adaptation with a second-order nature. More recently, matrix-based spectral optimizers like Muon \citep{jordan2024muon} show the power of treating weight matrices as matrices rather than long vectors. Linking these is hard because many natural generalizations are not feasible to implement, and we also cannot simply move the Adam adaptation to the matrix spectrum. To address this, we reformulate the AdaGrad update and decompose it into a variance adaptation term and a scale-invariant term. This decoupling produces \textbf{DeVA} (\textbf{De}coupled \textbf{V}ariance \textbf{A}daptation), a framework that bridges between vector-based variance adaptation and matrix spectral optimization, enabling a seamless transition from Adam to adaptive spectral descent. Extensive experiments across language modeling and image classification demonstrate that DeVA consistently outperforms state-of-the-art methods such as Muon and SOAP \citep{vyas2024soap}, reducing token usage by around 6.6\%. Theoretically, we show that the variance adaptation term effectively improves the blockwise smoothness, facilitating faster convergence. Our implementation is available at \url{https://github.com/Tsedao/Decoupled-Variance-Adaptation}

\end{abstract}

\section{Introduction}

%

A fundamental acceleration strategy in modern optimization involves adapting stochastic gradient descent to second-order curvature \citep{amari1998natural}. This adaptation enables accelerated progress along specific coordinates while suppressing exploding updates in others. 
Adaptive variants such as AdaGrad \citep{duchi2011adaptive}, RMSprop \citep{tieleman2012lecture}, and Adam \citep{adam2014method} can be viewed as Euclidean gradient descent equipped with inexpensive approximations to  curvature information. Among these, Adam has become the \textit{de facto} standard for large-scale training, particularly for large language models (LLMs) \citep{dubey2024llama, liu2024deepseek}.

\begin{figure}[t]
    \centering
    \includegraphics[width=0.9\linewidth]{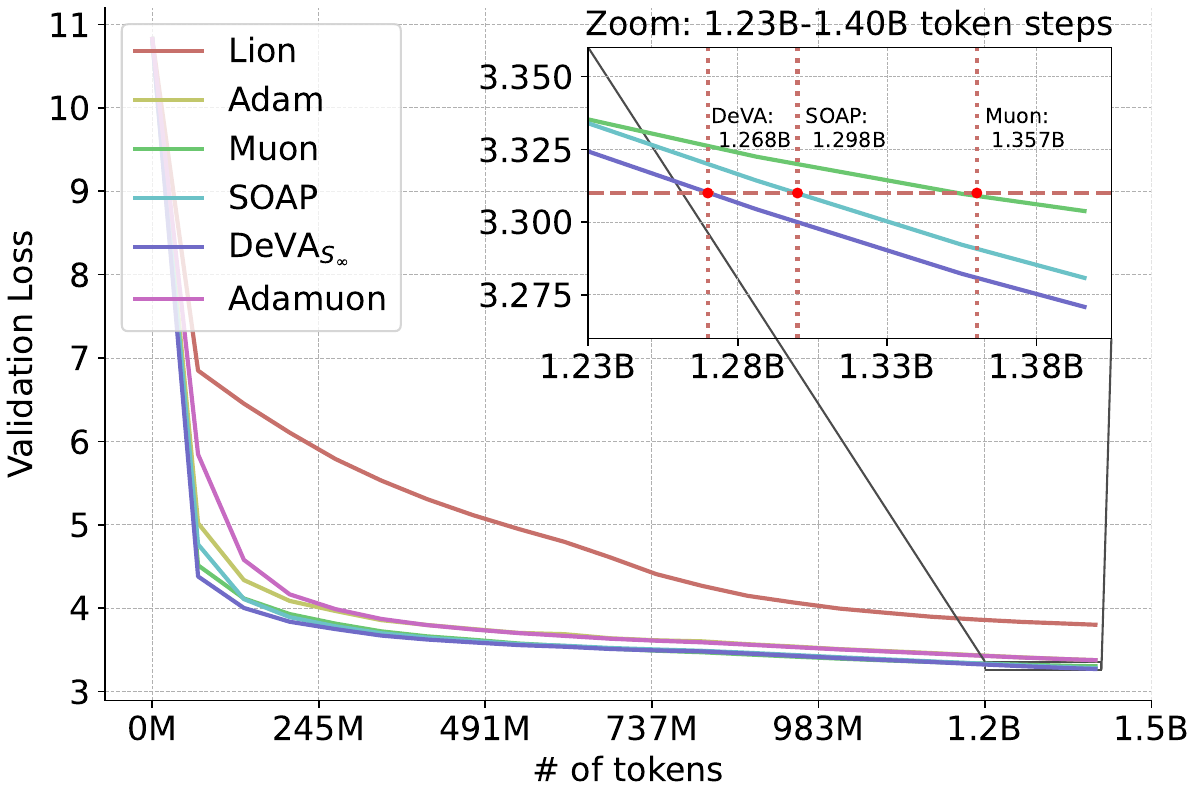}
    \caption{NanoGPT pretraining on FineWeb at a uniform $0.001$ learning rate. Compared to non-adaptive Muon, the adaptive method SOAP and \textbf{our method DeVA$_{S_{\infty}}$} achieve the target validation perplexity using $4.3\%$ and $6.6\%$ fewer tokens, respectively. \vspace{-20pt}}
    \label{fig:nano_val}
\end{figure}

\begin{table*}[!t]
  \centering
  \caption{Special instantiations of Adaptive Subgradient descent $M^{-1}g$ and $M^{-1}G$ using preconditioner $M^{-1}$ where $g\in\R^d$ and $G \in \R^{n \times m}$. Eigenbases are the rotation matrices we applied to the gradient $G$ and Decoupled Variance is the adaptive stepsize we decoupled from sign functions in \cref{eq:deva}. Here, $\sigma_i$ is the $i$-th singular value of $G$, and $\odot$ is the Hadamard / elementwise product. }
  \label{tab:summary}
  \begin{tabular*}{\textwidth}{@{\extracolsep{\fill}} p{0.23\textwidth}  C{0.36\textwidth}  C{0.09\textwidth}  C{0.22\textwidth} }
  \toprule
 \textbf{Method} & \textbf{Preconditioner $M^{-1}$} & \textbf{Eigenbases} & \textbf{Variance Adaptation} \\
  \midrule 
  {\mbox{\textbf{Adam}~\citep{adam2014method}}}   & $\E[\diag(gg^{T})]^{-1/2}$       & $I$    & $(\ema[g^2] / g^2)^{-1/2}$ \\ 
  {\mbox{\textbf{Shampoo}~\citep{gupta2018shampoo}}} & $(\E[GG^{T}]^{1/2} \otimes \E[G^{T}G]^{1/2})^{-1/2}$ & $I$ & None \\
  {\textbf{SOAP} \citep{vyas2024soap}} & $((\E[GG^{T}] \otimes \E[G^{T}G]) / \Tr(\E[GG^{T}]))^{-1/2}$ & $Q_L, Q_R$ & $(\ema[G\!\odot\! G]/G\!\odot\! G)^{-1/2}$ \\
  {\textbf{SPlus} \citep{frans2025stable}} & $(\E[GG^{T}]^{1/2} \otimes \E[G^{T}G]^{1/2})^{-1/2}$ & $Q_L, Q_R$ & None \\
  \textbf{DeVA$_{S\infty}$ (Ours)} & $(\E[(GG^{T})^{1/2} \otimes (G^{T}G)]^{1/2})^{-1/2}$ & $Q_L, Q_R$ & $(\ema[\sigma_i\sigma_j])/\sigma_i\sigma_j)^{-1/2}$ \\
  \bottomrule
  \end{tabular*}
\end{table*}

A simplifying aspect of adaptive gradient methods is that they ignore \emph{matrix structure} within the parameters and treat them as a single long vector. Muon \citep{jordan2024muon}, a spectral gradient method for layerwise matrices, showed that adaptive gradient methods could be improved upon by using this matrix structure and treating matrix parameters as matrices within the optimization. Muon has since demonstrated superior LLM pretraining performance over Adam \citep{team2025kimi, liu2025muon, wen2025fantastic}. Although some may argue, we regard Muon as a non-adaptive method as we discuss below. 

\emph{This presents an intriguing challenge: how do we combine adaptive updates with matrix-based variables? Doing so efficiently should (and does!, see~\cref{fig:nano_val}) accelerate spectral descent, mirroring its impact on Euclidean gradient descent.}

Efforts in this direction include AdaGO \citep{zhang2025adagrad}, Adamuon \citep{si2025adamuon}, and SPlus-then-Adam \citep{frans2025really}, which attempt to incorporate Adam’s second-order moments directly into orthogonal updates in Muon. While this idea is pragmatic, a satisfactory theoretical framework for applying Adam’s adaptive stepsizes to matrices remains elusive. Consequently, it is essential for us to understand the origin of adaptive step sizes in vector-based updates and extend them to matrices smoothly.


In the context of vector updates, an interpretation from \citet{balles2018dissecting,orvieto2025search} characterizes Adam as an adaptive and smoothed variant of Signum \citep{bernstein2018signsgd}. In this framework, the adaptive step size is modulated by the local \textit{signal-to-noise} ratio (SNR). Stating this requires the standard $m_t$, which is the exponential moving average of the gradient that is used in Adam, as well as the term $\sigma_t^2$, which is the exponential moving average of the variance $(m_{t-1} - g_t)^2$. These two terms allow the Adam update to be written as\footnote{Here, $\eta$ is a standard learning rate step size that applies to all coordinates. In future sections we drop this piece of the update.} 
\begin{align}
\label{eq:adam_connection}
\Delta x_t = - \underbrace{\frac{\eta}{\sqrt{1 + \sigma_t^2 / m_t^2}}}_{\text{adaptive stepsize}} \underbrace{\sign(m_t)\vphantom{\frac{\eta}{\sqrt{1 + \sigma_t^2 / m_t^2}}}}_{\text{direction}}.
\end{align}
This formulation allows a decoupling of the direction and the adaptive stepsize. Crucially, the gradient variance term $\sigma_t^2$ only arises in the stepsize.  

The Muon update can be written as a non-adaptive update with the  \emph{matrix sign} function \citep{chen2025muon}, which we call $\msign$. Analytically, the particular form used involves applying the \emph{sign} function to the vector of singular values in an SVD and then recombining. The result is an equally weighted set of all the orthogonal directions in the data (see~\cref{eq:msign}). In Muon, 
\begin{align}
     \Delta X_t = -\eta \msign(M_t),
\end{align}
where $M_t$ is a first-order and extra Nesterov momentum combination of gradient matrices. According to \citet{jordan2024muon}, the choice of $\msign$ here is hypothesized to help by amplifying ``rare directions", those with small singular values that are nonetheless critical for learning. More recently, \citet{davis2025spectral} shows that $\msign$ will be helpful when the matrix gradient $G_t$ has a large Nuclear-to-Frobenius norm ratio.




In the context of these two expressions for Adam and Muon, we wish to explore whether one can derive an adaptation matrix $\Gamma_t$ that \emph{functions as} the adaptive stepsize in Adam but works for a matrix spectrum. Formally, we want to ask can we derive an update 
\begin{align}
\Delta X_t = -\underbrace{\eta \Gamma_t}_{\text{matrix adaptive stepsize}} \odot \underbrace{\msign(G_t)}_{\text{matrix directions}},
\end{align}
where $\odot$ is the Hadamard product?

In this work, we systematically address this idea by introducing \textbf{DeVA} (\textbf{De}coupled \textbf{V}ariance \textbf{A}daptation), a unified framework that interprets adaptive gradient methods as preconditioned gradient descent. As a result, popular algorithms could be viewed as special instances of the general framework with different instantiations of the preconditioners $M$ (see \cref{tab:summary}).  Based on that, \textbf{DeVA} derives coordinate-wise adaptive stepsizes naturally for matrix parameters by decoupling a gradient variance adaptation and a scale-invariant update from the underlying preconditioned update. 
Our contributions are threefold:

\textit{Unification:} Under a diagonal curvature approximation 
we derive $\text{DeVA}_{\ell_\infty}$ and establish its fundamental connection to Adam.  Extending this to structured matrix updates, we utilize a Kronecker approximation 
to introduce $\devasf$, a novel adaptive spectral gradient descent method that enables curvature-aware updates throughout the training process (see \cref{thm:deva_coordinate_wise}). This framework further generalizes to other preconditioned methods, such as Shampoo \citep{gupta2018shampoo}, SOAP \citep{vyas2024soap}, and SPlus \citep{frans2025stable} when the expectation of the Kronecker product is separable.

\textit{Numerical Validation:} Our experiments, ranging from synthetic optimization benchmarks to language modeling (see \cref{fig:nano_val}) and large-scale image classification (see \cref{fig:imagenet_loss}) demonstrate that $\text{DeVA}_{S_\infty}$ consistently outperforms its non-adaptive counterpart Muon and adaptive spectral descent competitors SOAP and Adamuon. 


\textit{Theoretical Implication:} Finally, under the blockwise smoothness assumption, our theoretical analysis justifies the necessity of incorporating adaptive stepsize into matrix-based optimizers by showing it helps to reduce the blockwise smoothness that appears inside the convergence result (see \cref{thm:devalf} and \cref{thm:devasf}).
\paragraph{Organization.} \cref{sec:preliminary} provides preliminaries and key related work on descent methods. \cref{sec:methods,sec:analysis} introduce and analyze the DeVA framework and its variants ($\devalf$, $\devasf$). Numerical evaluations follow in \cref{sec:experiment}, while additional related work is delegated to Appendix \ref{appsec:related_work}.

\section{Preliminaries}
\label{sec:preliminary}

\paragraph{Notation.}
We use lowercase italics ($x, g$) for vectors/scalars and uppercase letters ($X, G$) for matrices. Powers are \emph{matrix-powers}, for example, $X^{-1/2}$ denotes the matrix inverse square root unless explicitly stated otherwise. Subscripts $x_t$ denote iterates, and $\ema[\cdot]$ signifies the exponential moving average. We denote the inner product as $\inner{\cdot}{\cdot}$, while $\odot, \oslash$, and $\otimes$ represent the Hadamard product, Hadamard division, and Kronecker product, respectively. We use $e_i$ as the $i$-th standard basis vector, and $1_m$ is the $m$-dimensional all-ones vector. Where clear from context, we'll use $g_t$ to be the function gradient at the $t$th iteration for the vector case and $G_t$ to be the function gradient for a matrix variable. 

\paragraph{Steepest Gradient Descent.} Given a general function $f : \R^d \to \R$ and a Mahalanobis norm defined on a positive-definite matrix $M$, i.e., $\norm{\cdot}_{M} = \sqrt{\inner{\cdot }{M\cdot}}$, let $g=\nabla f(x)$. Then steepest gradient descent under this norm generates a direction $\Delta x^{*}$ by minimizing a local quadratic approximation of $f$ at $x$:
\begin{align}
    \Delta x ^{*} = \argmin_{\Delta x} \inner{g}{\Delta x} + \frac{1}{2} \norm{\Delta x}_{M}^2.
\end{align}
Solving this yields $\Delta x^{*}= -M^{-1} g$, where $M^{-1}$ is often referred to as a \textit{preconditioner}. This formulation is the basis for adaptive subgradient methods \citep{duchi2011adaptive}. 

In AdaGrad \citep{duchi2011adaptive}, $M$ is defined as $(\sum_{t} g_t^{} g_t^{T})^{1/2}$, a choice motivated by regret minimization. We will see many $gg^T$ operations and it simplifies notation to let $H = gg^T$. More recently, $M$ has been interpreted as a \textit{whitening metric} \citep{frans2025stable,eschenhagen2026clarifying}, defined as the square root of the gradient covariance, $M_{\text{Whitening}} := \E[H]^{1/2}$ (Unless stated otherwise, $\E[\cdot]$ denotes expectation over minibatches), so that $M_{\text{Whitening}}^{-1}$ normalizes the covariance of the gradient vector $g$. This yields the whitening update 
\begin{align}
\label{eq:sgd}
    \Delta x^{*}_{\text{whitening}} = -\E[H]^{-1/2} g.
\end{align}
The setup leads to some nice connections such as an equivalence with the Fisher Information Matrix $F(x)$ when $f$ is a negative log-likelihood function, in which case the Hessian $\gH(x):=\nabla^2 f(x)$ also coincides with $F(x)$. 
Under this scenario, the $\E[H]$ term can further be considered as the Gauss-Newton or the covariance matrix of the gradient \citep{martens2020new,morwani2024new}. Notably, Adam \citep{adam2014method} (or  RMSprop \citep{tieleman2012lecture} with zero first-order momentum) can be viewed as a computationally efficient realization of \cref{eq:sgd}, where $H$ is restricted to a diagonal approximation $\diag(g g^T)$ and the expectation is replaced by an exponential moving average (EMA). This avoids the intractability of computing matrix square roots in high-dimensional parameter spaces.

\paragraph{Matrix-based Descent.} For a function $f : \R^{n \times m} \to \R$ with parameter $X \in \R^{n \times m}$ and gradient $G \in \R^{n \times m}$, a standard AdaGrad-style update requires maintaining a prohibitively large $H \in \R^{nm \times nm}$ by vectorizing $G$. To circumvent this, Shampoo \citep{gupta2018shampoo} approximates the covariance matrix $\E[\fvec(G)\fvec(G)^{T}]$ by accumulating two smaller factorized covariance matrices, $\E[L] \in \R^{n \times n}$ and $\E[R] \in \R^{m \times m}$, where $L=GG^{T}$ and $R=G^{T}G$. The resulting Shampoo update $\Delta X^{*}$, expressed in the form of \cref{eq:sgd}, is given by:
\begin{align}
\label{eq:shampoo}
    \fvec(\Delta X^{*}_{\text{Shampoo}}) = -\!\left(\E [L]^{1/2} \otimes \E[R]^{1/2}\right)^{-1/2} \fvec(G).
\end{align}
Similarly, SOAP \citep{vyas2024soap, morwani2024new} approximates the Gauss-Newton matrix by computing an optimal Kronecker product approximation, which yields:
\begin{align}
  \fvec(\Delta X^{*}_{\text{SOAP}}) = -\! ((\E [L] \otimes \E[R])/\Tr(\E[L]))^{-1/2} \fvec(G).
\end{align}
If we consider the gradient sample by removing the expectation in \cref{eq:shampoo}, the term $(L^{1/2} \otimes R^{1/2})^{-1/2} \text{vec}(G)$ can be simplified. Using the identity $(A \otimes B)^k = A^k \otimes B^k$ and the vectorization property $(A \otimes B) \fvec(X) = \fvec(AXB^{T})$, the update simplifies to $\Delta X^* = -L^{-1/4} G R^{-1/4}$. Substituting the singular value decomposition $G = U\Sigma V^T$, the update becomes:
\begin{align}
\label{eq:msign}
    -\Delta X^{*}_{\text{SpectralGD}} = (GG^{T})^{-1/4}G(G^{T}G)^{-1/4} = UV^{T}.
\end{align}
This result recovers spectral gradient descent methods \citep{carlson2015stochastic, jordan2024muon}, which effectively orthogonalize the gradient via the matrix sign function. Consequently, by setting $H = L^{1/2} \otimes R^{1/2}$, update $-H^{-1/2} \text{vec}(G)$ indeed is performing an orthogonalization step. If we plug this $H$ back to the whitening update in \cref{eq:sgd},  we can reframe it as a Shampoo-style update:
\begin{align}
\label{eq:shampoo_reframed}
\fvec(\Delta X^{*}_{\text{Shampoo-style}}) = -\E[L^{1/2} \otimes R^{1/2}]^{-1/2} \fvec(G).
\end{align}

\section{Our Methods}
\label{sec:methods}
In this section, we present the \textbf{De}coupled \textbf{V}ariance \textbf{A}daptation (\textbf{DeVA}) framework. To motivate our approach, consider the whitening update in \cref{eq:sgd}: $\Delta x^* = -\mathbb{E}[H]^{-1/2}g$. As established in previous sections, if $H=\diag(gg^{T})$, $\E[H]$ becomes, under the EMA approximation, the adaptive step size in Adam and RMSprop. In the matrix case where $H=L^{1/2} \otimes R^{1/2}$, $\E[H]$ then becomes the Shampoo-style update.

If we remove the expectation of $H$, we make a key observation. Specifically, $H^{-1/2}g$ yields a sign-based update: it recovers $\sign(g)$ when $H=\diag(gg^{T})$ and the matrix sign $\msign(G)$ when $H=L^{1/2} \otimes R^{1/2}$. This suggests that the scale-invariant update, $\sign(g)$ or $\msign(g)$, is indeed embedded within the preconditioned update $H^{-1/2}g$.

Motivated by that, we define the DeVA framework by refactoring the whitening update in \cref{eq:sgd} as follows:
\begin{align}
\label{eq:deva}
\Delta x^{*}_{\text{whitening}} = -\underbrace{\left( \mathbb{E}[H]^{-1/2} H^{1/2} \right)}_{\text{Decoupled Variance Adaptation}} \cdot \underbrace{\left( H^{-1/2} g \right)}_{\text{Scale-Invariant update}},
\end{align}
This decomposition provides an appealing representation of adaptive step sizes for gradient descent methods with a scale-invariant update. Here we call the term $\mathbb{E}[H]^{-1/2} H^{1/2}$ as \textit{variance adaptation}. When $H=gg^{T}$, this \textit{variance adaptation} is also called the \textit{signal-to-noise ratio} (SNR) of $g$ in \citet{balles2018dissecting,orvieto2025search}, i.e., $(gg^{T}/\E[gg^{T}])^{1/2}$, where a larger SNR implies a larger update in the steepest decent direction.  In the following subsections, we first establish that the DeVA framework recovers Adam in the vector setting; we then generalize this framework to matrix optimization, introducing a novel adaptive spectral descent method.

\begin{algorithm}[!t]
\caption{$\deva_{\ell_{\infty}}$}\label{alg:deva_linfty}
\begin{algorithmic}[1]
\INPUT $\beta_1$, $\beta_2$, $\epsilon$, $T$, $\eta_t$ for $t=1,\dots,T$
\OUTPUT $x_T \in \R^d$
\STATE Initialize $m_0, v_0$ to $0$
\STATE Randomly initialize $x_1 \in \R^d$
\FOR{$t=1,...,T$}
\STATE $g_{t} = \nabla f(x_{t},\xi_{t}) $
\STATE $m_{t}= \beta_1 m_{t-1} + (1-\beta_1)g_{t} $
\STATE $v_t = \beta_2 v_{t-1} + (1-\beta_2) {\color{blue} m_t^2}$ \emph{\textcolor{blue}{$\leftarrow$ differs from Adam}}  \label{line:devalf}
\STATE $\gamma_t = \left(\frac{v_t}{ m_t^2 + \epsilon }\right)^{-1/2}$
\STATE $d_t =\gamma_t \odot \sign(m_t)$
\STATE $x_{t+1} = x_t - \eta_t d_t$
\ENDFOR
\end{algorithmic}
\end{algorithm}

\subsection{Connection to Adam}
For vectors $\Delta x, g \in \R^d$, let $H = \diag(gg^T)$. We can express the DeVA framework from \cref{eq:deva} in an element-wise form as:
\begin{align}
\label{eq:deva_linfty}
    \Delta x^{*}_{\devalf} = -\underbrace{\bigg(\frac{\E [g^2]}{g^2} \bigg)^{-1/2}}_{\text{Decoupled Variance Adaptation}} \odot \sign(g),
\end{align}

where $g^2$ denotes the element-wise square and division is element-wise. In this formulation, we have successfully decoupled the gradient variance term from the standard sign-based update. Since $\text{sign}(g)$ is the solution to the constrained linear minimization problem under an $\ell_{\infty}$-norm ball, i.e., $\sign (g):=-\argmin_{\norm{\Delta x}_{\ell_{\infty}} \le 1} \inner{g}{\Delta x} $, we refer to this specific instance as $\devalf$.

Notably, $\devalf$ recovers the Adam optimizer when the gradient expectation is approximated by the EMA under the bias correction \citep{adam2014method}. Recall that the Adam update (or RMSprop with zero first-order momentum) is given by:
\begin{align}
    \Delta x^{*}_{\text{Adam}} =- \frac{g}{\sqrt{\ema[g^2]}} =-\bigg(\frac{\ema[g^2]}{g^2}\bigg)^{-1/2} \odot \sign(g).
\end{align}
When first-order momentum is incorporated, Adam replaces the raw gradient $g$ with its moving average $m_t = \ema[g]$. Similarly, in the DeVA framework, we replace $g$ with its expectation $\E[g]$ in \cref{eq:deva_linfty}. In practice, since evaluating the true expectation $\E[g]$ is computationally prohibitive, $\devalf$ also utilizes the $\ema[g]$ to approximate $\E[g]$.

\begin{algorithm}[!t]
\caption{$\devasf$}\label{alg:deva_sinfty}
\begin{algorithmic}[1]
\INPUT $\beta_1, \beta_2, \beta_3, \text{freq}$, $\epsilon, T$, $\eta_t$ for $t=1,...,T$
\OUTPUT $X_T \in \R^{n \times m}$
\STATE Initialize $L_0, R_0, M_0, V_0 $ to $0$
\STATE Randomly initialize $X_1 \in \R^{n \times m}$
\FOR{$t=1,...,T$}
\STATE $G_{t} = \nabla f(X_{t},\xi_{t}) $
\STATE $L_t = \beta_3 L_{t-1} + (1-\beta_3) G_tG_t^{T}$
\STATE $R_t=\beta_3 R_{t-1} + (1-\beta_3)G_t^{T}G_t$
\IF{$t \text{ mod } \text{freq is }  0$} 
\STATE $Q_L,Q_R = \texttt{eigdecomp}(L_t), \texttt{eigdecomp}(R_t)$ \label{line:deva_eigde}
\ENDIF
\STATE $G_t'=Q_L^{T}G_tQ_R$
\STATE $M_{t}= \beta_1 M_{t-1} + (1-\beta_1)G_{t}' $
\STATE $r_t = \left( \sqrt{\sum_{j=1}^m (M_t)_{ij}^2} \right)_{i=1}^n$ 
\STATE $c_t = \left( \sqrt{\sum_{i=1}^n (M_t)_{ij}^2} \right)_{j=1}^m$
\STATE $V_t = \beta_2 V_{t-1} + (1-\beta_2) {\color{blue} r_t^{} c_t^{T}}$ \emph{\textcolor{blue}{$\leftarrow$ spectral adaptivity}} \label{line:deva_second_update}
\STATE $\Gamma_t = \left(V_t \oslash (r_t^{} c_t^{T} +\epsilon)\right)^{-1/2}$
\STATE $D_t =\Gamma_t \odot \msign(M_t) \odot  0.2 \sqrt{\max (n,m)}$\label{line:rms_normalize}
\STATE $X_{t+1} = X_t - \eta_t Q_L D_t Q_R^{T}$
\ENDFOR 
\end{algorithmic}
\end{algorithm}

A key distinction to Adam arises in this substitution: by replacing $g$ with $m_t$ in \cref{eq:deva_linfty}, the numerator of the decoupled variance term becomes $\E[m_t^2]$. This implies that we are accumulating the square of the expected gradient to estimate the second-order moment, rather than the squared sample gradient (Line 6 in \cref{alg:deva_linfty}). This shift also aligns with SNR interpretation 
\citep{orvieto2025search}, where the update magnitude is modulated by the reliability of the gradient signal $m_t$ relative to its stochastic variance $\Var(m_t)$. The complete procedure for $\devalf$ is summarized in \cref{alg:deva_linfty}.

\subsection{Matrix Extension} We now extend our framework to the matrix case, where $X, G\in \R^{n \times m}$. Here we let $H =L^{1/2} \otimes R^{1/2}$, where $L=GG^{T} \in \R^{n \times n}$ and $R=G^{T}G \in \R^{m \times m}$ are the left and right factorized covariance matrices, respectively. For $\Delta X \in \R^{n \times m}$, the DeVA representation yields: 
\begin{equation}
\label{eq:deva_sinfty}
\begin{aligned}
 & \quad \fvec(\Delta X^{*}_{\devasf}) \\
&= -\E[H]^{-1/2} H^{1/2} H^{-1/2} \fvec(G)  \\
&= -
\underbrace{
    \E[(L \otimes R)^{1/2}]^{-1/2}
    (L \otimes R)^{1/4}
}_{\text{Decoupled Variance Adaptation}}
\fvec(\msign(G)).
\end{aligned}
\end{equation}
where the second equality follows $H^{-1/2}\fvec(G)$ gives a $\msign$ update when $H=L^{1/2} \otimes R^{1/2}$. 
Since $\msign(G)$ is the solution of steepest gradient descent under Schatten-$\infty$ ($S_{\infty}$) norm, we name this variant as $\devasf$. While \cref{eq:deva_sinfty} successfully isolates the matrix sign function, the decoupled variance term remains computationally intractable due to the high-dimensional matrix powers involved. To resolve this, we leverage the eigendecomposition approach used in SOAP \citep{vyas2024soap}, allowing us to derive the equivalent but more efficient spectral representation presented in \cref{thm:deva_coordinate_wise}.

\begin{restatable}[Coordinate-wise $\devasf$]{theorem}{devasfde}
\label{thm:deva_coordinate_wise}
Let $L \in \mathbb{R}^{n \times n}$ and $R \in \mathbb{R}^{m \times m}$ be Kronecker factors with eigendecompositions $L = Q_L^{} \Lambda_L^{} Q_L^T$ and $R = Q_R^{} \Lambda_R^{} Q_R^T$. Let $\sigma_i = \sqrt{\lambda_i}$ and $\sigma_j = \sqrt{\mu_j}$ denote the singular values corresponding to the eigenvalues of $L$ and $R$, respectively. With the Kronecker curvature approximation $\mathbb{E}[L^{1/2} \otimes R^{1/2}]$, and assuming local stability (see the proof), the optimal update $\Delta X^{*}$ in \cref{eq:deva} has spectral coordinate-wise form:
\begin{equation}
\label{eq:deva_sinfty_final}
\Delta X^{*}_{\devasf} = -Q_L \left( \widetilde{E}^{-1/2} \odot \msign(Q_L^{T} G Q_R^{}) \right) Q_R^{T},
\end{equation}
where $\widetilde{E} \in \mathbb{R}^{n \times m}$ is the spectral adaptation matrix with entries defined by the ratio of expected to sample gradient singular value products:
\begin{equation}
\widetilde{E}_{ij} = \frac{\mathbb{E}[\sigma_i \sigma_j]}{\sigma_i \sigma_j}.
\end{equation}
Here, $\odot$ and the exponent $-1/2$ denote element-wise operations within the rotated spectral space.
\end{restatable}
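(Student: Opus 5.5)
The plan is to start from the second line of \cref{eq:deva_sinfty}, namely $\fvec(\Delta X^*) = -\E[(L\otimes R)^{1/2}]^{-1/2}(L\otimes R)^{1/4}\fvec(\msign(G))$, and rotate everything into the Kronecker eigenbasis $Q := Q_L\otimes Q_R$. Since $L = Q_L\Lambda_L Q_L^T$ and $R = Q_R\Lambda_R Q_R^T$, the mixed-product rule for Kronecker products gives $L\otimes R = Q(\Lambda_L\otimes\Lambda_R)Q^T$. Hence the sample factor is diagonal in this basis:
\[
(L\otimes R)^{1/4} = Q\,(\Lambda_L\otimes\Lambda_R)^{1/4} Q^T .
\]
Its diagonal entry indexed by $(i,j)$ is $(\lambda_i\mu_j)^{1/4} = (\sigma_i\sigma_j)^{1/2}$.

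The \emph{local stability} assumption I would state explicitly is that the eigenbases $Q_L,Q_R$ are (approximately) fixed across the distribution over which $\E$ is taken. Only the spectra $\Lambda_L,\Lambda_R$ fluctuate. This is the SOAP-style assumption that the preconditioner basis changes slowly, which matches the infrequent \texttt{eigdecomp} in \cref{alg:deva_sinfty}. Under it, the expectation commutes with conjugation by $Q$:
\[
\E[(L\otimes R)^{1/2}] = Q\,\E[(\Lambda_L\otimes\Lambda_R)^{1/2}]\,Q^T .
\]
The middle matrix is diagonal with entries $\E[\sigma_i\sigma_j]$. Its inverse square root is therefore $Q\,\diag(\E[\sigma_i\sigma_j])^{-1/2}Q^T$.

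Multiplying the two factors, the inner $Q^TQ=I$ cancels. The decoupled variance adaptation becomes $Q\,D\,Q^T$, where $D$ is diagonal with entries
\[
\bigl(\E[\sigma_i\sigma_j]\bigr)^{-1/2}(\sigma_i\sigma_j)^{1/2} = \widetilde E_{ij}^{-1/2}.
\]
Acting on $\fvec(\msign(G))$, I would first apply $Q^T$. By $(A\otimes B)\fvec(X)=\fvec(AXB^T)$ this gives $\fvec(Q_L^T\msign(G)Q_R)$. Next, a diagonal matrix acting on a vectorization is exactly a Hadamard product with the reshaped diagonal, $\widetilde E^{-1/2}$. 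Finally, applying $Q$ gives $\fvec(Q_L(\cdot)Q_R^T)$. One needs to check that the index convention of $\fvec$ matches the ordering $(i,j)$ in $\Lambda_L\otimes\Lambda_R$; I would verify this against the convention used in \cref{eq:shampoo}.

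The remaining step is to identify $Q_L^T\msign(G)Q_R$ with $\msign(Q_L^TGQ_R)$. Write the SVD $G=U\Sigma V^T$. The eigenvectors of $GG^T$ are $U$ and those of $G^TG$ are $V$, so when $L,R$ are the sample factors we may take $Q_L=U$ and $Q_R=V$. Both sides then equal the (rectangular) identity pattern. More generally, orthogonal equivariance gives $\msign(AGB^T)=A\,\msign(G)B^T$ for orthogonal $A,B$, which yields the identity directly with $A=Q_L^T$ and $B=Q_R^T$. This establishes \cref{eq:deva_sinfty_final}.

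The main obstacle is justifying the commutation of $\E$ with the eigenbasis. In general, $\E[(L\otimes R)^{1/2}]$ is not diagonalized by the bases of a single sample, so this step genuinely requires the stability hypothesis rather than following from algebra. I would isolate it as the stated assumption. A second delicacy is rank deficiency: when $n\ne m$ or some $\sigma_i=0$, the inverse powers must be read as pseudo-inverses, which the $\epsilon$ regularization in the algorithm handles.
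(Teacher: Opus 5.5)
Your proposal is correct and follows essentially the same route as the paper's proof. Under the same local-stability assumption (the expectation acts only on the eigenvalues), you conjugate into the Kronecker eigenbasis $Q_L\otimes Q_R$, collapse the diagonal factor $\E[(\Lambda_L\otimes\Lambda_R)^{1/2}]^{-1/2}(\Lambda_L\otimes\Lambda_R)^{1/4}$ into the Hadamard multiplier $\widetilde{E}^{-1/2}$ via $(A\otimes B)\fvec(X)=\fvec(AXB^T)$, and finish with the orthogonal equivariance of $\msign$; your remarks on the vec index convention and on zero singular values are points the paper leaves implicit.
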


\textit{Proof Sketch:}
The derivation proceeds in three main steps:
\begin{enumerate}
    \item \textbf{Spectral Decomposition:} Under the assumption of local stability, the expectation $\mathbb{E}[L^{1/2} \otimes R^{1/2}]$ factorizes as $(Q_L \otimes Q_R) \mathbb{E}[\Lambda_L^{1/2} \otimes \Lambda_R^{1/2}] (Q_L^T \otimes Q_R^T)$.
    
    \item \textbf{Vectorization and Commutation:} Using the identity $(A \otimes B) \operatorname{vec}(X) = \operatorname{vec}(AXB^T)$ and the fact that orthogonal transformations commute with the $\msign$ (i.e., $Q \msign(X) = \msign(QX)$), we project the update into the spectral space defined by $Q_L$ and $Q_R$.
    
    \item \textbf{Diagonal Compression:} We define the diagonal scaling matrix $E = \mathbb{E}[(\Lambda_L \otimes \Lambda_R)^{1/2}] (\Lambda_L \otimes \Lambda_R)^{-1/2}$ and compress the $nm \times nm$ diagonal matrix $E$ into the $n \times m$ matrix $\widetilde{E}$, where $\widetilde{E}_{ij} = \mathbb{E}[\sigma_i \sigma_j] / (\sigma_i \sigma_j)$. 
\end{enumerate}
Substituting these into the vectorized update and mapping back to matrix form yields the Hadamard product $\Delta X^{*} = -Q_L (\widetilde{E}^{-1/2} \odot \msign(Q_L^T G Q_R)) Q_R^T$.
\qed

The coordinate-wise simplification in \cref{thm:deva_coordinate_wise} reveals three significant insights. First, the numerator $\E[\sigma_i \sigma_j]$ tracks the covariance of the singular values of the gradient, rather than the raw gradient covariance $\E[gg^T]$ used in standard vector adaptive methods. Second, the variance adaptation and the matrix sign update occur within the rotated spectral space defined by $Q_L$ and $Q_R$, which provides a principled choice of basis. Finally, the Kronecker curvature approximation can be reduced to an inexpensive element-wise multiplication, facilitating efficient implementations on large-scale problems. The complete derivation is provided in \cref{appsec:proof_deva_co}.   


\subsection{Practical Implementation}
In the $\devasf$ framework, the decoupled variance matrix is composed of entries 
$\tfrac{\E[\sigma_i \sigma_j]}{\sigma_i \sigma_j}$.
However, the numerator requires the expectation of the singular values of the gradient $G$ at each iteration, which is still computationally prohibitive. To address this, we remark on the following connection between the eigenvalues of the factorized covariance matrices and the rotated gradient $G'$.
\begin{restatable}{proposition}{eigprop} \label{prop:eigprop}
For a gradient $G \in \R^{n \times m}$, let $Q_L^{}\Lambda_L^{} Q_L^{T}$ and $Q_R^{}\Lambda_R^{}Q_{R}^T$ be the eigendecompositions of $L=GG^{T}$ and $R=G^TG$, respectively. Let $\lambda_i$ and $\mu_j$ denote the $i$-th and $j$-th eigenvalues of $L$ and $R$. If we define rotated gradient as $G'=Q_{L}^{T}GQ_{R}^{}$, then:
\begin{align}
    \lambda_i &= e_{i}^{T} (G' \odot G') 1_{m}, \quad
    \mu_j=1_{n}^{T}(G' \odot G')e_{j}.
\end{align}
\end{restatable}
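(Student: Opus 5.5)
The plan is to compute the row and column sums of $G'\odot G'$ directly and recognize them as diagonal entries of the rotated Kronecker factors. The key identity is that, for any matrix $A$, the $i$-th row sum of $A\odot A$ is $\sum_j A_{ij}^2 = (AA^T)_{ii} = e_i^T A A^T e_i$. Likewise, the $j$-th column sum is $\sum_i A_{ij}^2 = (A^TA)_{jj}$. In the notation of the statement, $e_i^T (A\odot A)1_m = (AA^T)_{ii}$ and $1_n^T(A\odot A)e_j = (A^TA)_{jj}$. This is a one-line expansion of the matrix product entrywise.

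Apply this with $A = G' = Q_L^T G Q_R$.
\begin{itemize}
\item For the rows, orthogonality of $Q_R$ gives $G'G'^T = Q_L^T G Q_R Q_R^T G^T Q_L = Q_L^T (GG^T) Q_L = Q_L^T L Q_L = \Lambda_L$, using the eigendecomposition $L = Q_L\Lambda_L Q_L^T$. Hence $e_i^T(G'\odot G')1_m = (\Lambda_L)_{ii} = \lambda_i$.
\item For the columns, orthogonality of $Q_L$ gives $G'^TG' = Q_R^T G^T Q_L Q_L^T G Q_R = Q_R^T R Q_R = \Lambda_R$. Hence $1_n^T(G'\odot G')e_j = (\Lambda_R)_{jj} = \mu_j$.
\end{itemize}

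The only point needing care is bookkeeping rather than a real obstacle. We need $Q_L\in\R^{n\times n}$ and $Q_R\in\R^{m\times m}$ to be full orthogonal matrices, so that $Q_RQ_R^T = I_m$ and $Q_LQ_L^T = I_n$. This holds for eigendecompositions of the symmetric PSD matrices $L$ and $R$, even when $n\neq m$ and some eigenvalues are zero. The indexing of $\lambda_i,\mu_j$ must also match the column ordering of $Q_L,Q_R$.

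It is worth remarking that $G'$ is in fact diagonal up to this ordering, since $G'G'^T$ and $G'^TG'$ are both diagonal. This is consistent with the singular-value interpretation $\sigma_i=\sqrt{\lambda_i}$ in \cref{thm:deva_coordinate_wise}. However, the proof does not need it: only the diagonals of $G'G'^T$ and $G'^TG'$ enter. This matters in practice because the algorithm applies the formula with $L_t,R_t$ being EMAs rather than the instantaneous $GG^T$, and the identity is used there as an approximation.
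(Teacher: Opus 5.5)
Your proof is correct and is essentially the paper's argument: the paper writes the $i$-th row sum of $G'\odot G'$ as $\|u_i^TGQ_R\|_2^2 = u_i^TGG^Tu_i = \lambda_i$, which is exactly the $(i,i)$ entry of your identity $G'G'^T = Q_L^TLQ_L = \Lambda_L$, and the column case is handled symmetrically. One caution on your side remark, which the proof does not use: $G'G'^T$ and $G'^TG'$ being diagonal does not force $G'$ to be diagonal when singular values repeat (e.g.\ $G=I$ with orthogonal $Q_L\neq Q_R$ gives the non-diagonal $G'=Q_L^TQ_R$).
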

\begin{corollary}
\label{cor:deva_adapt_rate}
Let $G'=Q_L^{T}GQ_R^{}$. Let $\sigma_i$ be the $i$-th singular value of $G$ and $\text{rank}(G)=r$, for $i,j \le r $, we have
\begin{align*}
    \sigma_i = \sqrt{\lambda_i} =\norm{G'_{i,\cdot}}_{2}, \text{ or } \sigma_j=\sqrt{\mu_j} = \norm{G'_{\cdot,j}}_{2},
\end{align*}
where $\norm{G'_{i,\cdot}}_{2}$ and $\norm{G'_{\cdot,j}}_{2}$ denote the $\ell_2$-norms of the $i$-th row and $j$-th column of $G'$, respectively.
\end{corollary}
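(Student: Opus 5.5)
The plan is to read the Corollary off \cref{prop:eigprop} by identifying the diagonal of $L$ and $R$ in the rotated basis. Since $L = GG^T$ and $R = G^TG$ are the left and right Gram matrices of $G$, the singular values of $G$ are $\sigma_i = \sqrt{\lambda_i}$. This holds for the first $r$ eigenvalues when both eigendecompositions are ordered nonincreasingly; the nonzero eigenvalues of $L$ and $R$ coincide and equal $\sigma_1^2 \ge \dots \ge \sigma_r^2 > 0$. So the first equalities $\sigma_i = \sqrt{\lambda_i}$ and $\sigma_j = \sqrt{\mu_j}$ for $i,j \le r$ are standard SVD facts. I would state them with the ordering convention made explicit, since otherwise the indexing of $\lambda_i$ and $\mu_j$ is ambiguous.

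For the row and column norm identities, I would invoke \cref{prop:eigprop} directly. The quantity $e_i^T (G' \odot G') 1_m = \sum_{j=1}^m (G'_{ij})^2 = \norm{G'_{i,\cdot}}_2^2$, so $\lambda_i = \norm{G'_{i,\cdot}}_2^2$. Likewise $\mu_j = 1_n^T (G'\odot G') e_j = \sum_{i=1}^n (G'_{ij})^2 = \norm{G'_{\cdot,j}}_2^2$. Taking nonnegative square roots, which is legitimate since $\lambda_i,\mu_j \ge 0$, gives the claim.

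If I wanted to be self-contained rather than rely on the proposition, I would verify it in one line. We have $G'G'^T = Q_L^T G Q_R Q_R^T G^T Q_L = Q_L^T L Q_L = \Lambda_L$, whose $(i,i)$ entry is both $\lambda_i$ and $\norm{G'_{i,\cdot}}_2^2$. Symmetrically, $G'^TG' = Q_R^T R Q_R = \Lambda_R$.

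The only delicate point, and the one I expect to be the main obstacle, is consistency of indices across $L$ and $R$. \cref{prop:eigprop} holds for any orthonormal eigenbases, but identifying $\lambda_i$ with $\sigma_i^2$ requires the sorted ordering. When singular values are repeated, the eigenvectors are non-unique; the row and column norm identities still hold for any choice, so this causes no problem. Restricting to $i,j\le r$ avoids mislabeling zero eigenvalues: for $i>r$ the row norm is $0$, which matches $\lambda_i=0$ but has no singular value partner.
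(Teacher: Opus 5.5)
Your proposal is correct and follows the paper's route: the corollary is read off \cref{prop:eigprop}, whose proof computes $\sum_k (G'_{ik})^2 = u_i^T G G^T u_i = \lambda_i$. That quantity is exactly the $(i,i)$ entry of your identity $G'(G')^{T} = Q_L^T L Q_L = \Lambda_L$, and combining it with the standard fact $\sigma_i=\sqrt{\lambda_i}$ gives the claim, with your remark that this identification needs both eigendecompositions sorted nonincreasingly (and $i,j\le r$) being a worthwhile precision the paper leaves implicit.
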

Note that both of these are fancy ways of using the singular value decomposition of $G$, which gives $G'$ as the diagonal matrix of singular values. We state them, however, because they motivate the final update we use as a computable approximation. 
Equipped with \cref{cor:deva_adapt_rate}, we can represent the expectation, $\E[\sigma_i \sigma_j]$, as the entries of the matrix $\E[ rc^{T} ]$, where $r\in\R^{n}$, $c\in\R^{m}$, $r_i=\norm{G'_{i,\cdot}}_{2}$, and $c_j=\norm{G'_{\cdot,j}}_{2}$. 

This structure bears a resemblance to the second-order moment in the idealized Adafactor \citep{shazeer2018adafactor} update discussed in SOAP \citep{vyas2024soap}. However, a key distinction is that idealized SOAP estimates the second momentum by taking the product of independent expectations of squared norm and then normalize, i.e., $(\E[r^2]\E[c^2]^{T})\oslash \E[r^2]$, whereas $\devasf$ take the expectation of the outer product of unsquared norm, i.e., $\E[ rc^{T} ]$. Generally, to compute the preconditioner $M$, the expectation of a Kronecker product in $\devasf$ does not equal the Kronecker product of expectations in SOAP. This discrepancy marks a fundamental distinction between our work and existing methods such as K-FAC \citep{martens2015optimizing} and Shampoo \citep{gupta2018shampoo}.  


To extend $\devasf$ to momentum-based optimization, we apply an EMA to the rotated gradient $G'$ and detail the complete $\devasf$ procedure in \cref{alg:deva_sinfty}. We emphasize that since we accumulate the moving average of the rotated gradient $\ema[G']$ not $G'$ itself, the second momentum should track $\ema[rc^{T}]$ (Line 12--14). Furthermore, following \citet{team2025kimi}, we normalize the orthogonalized update by $0.2 \sqrt{\max(n, m)}$ (Line 16) to align the RMS norm of standard Adam updates. Finally, to maintain computational efficiency, we approximate the full eigendecomposition (Line 8) with a single-step power iteration and QR decomposition, as suggested by \citet{vyas2024soap}.

\section{Analysis}
\label{sec:analysis}

To characterize the updates, we first define a norm that can incorporate the variance adaptation weights.
\begin{definition}[\textbf{$\gamma$-Weighted Dual Norm}]
\label{defn:gamma_norm}
   Let $\|\cdot\|$ be a norm on $\mathbb{R}^n$ and $\langle \cdot, \cdot \rangle$ be the standard inner product. Given a weight vector $\gamma \in \mathbb{R}^n_{++}$, we define the $\gamma$-weighted inner product as $\inner{s}{x}_\gamma := \inner{s\odot\sqrt{\gamma}}{x\odot\sqrt{\gamma}}$. Let $W=\diag(\gamma)$, the $\gamma$-weighted dual norm $\|\cdot\|_{*,\gamma}$ is defined as:
   \begin{align}  
   \norm{s}_{*,\gamma} := \sup_{\norm{x} \leq 1}\inner{s}{x}_\gamma = \norm{Ws}_*
    \end{align}
\end{definition}
Recall that for any $\ell_p$-norm, the solution to the constrained linear minimization problem $\argmin_{\norm{x} \leq 1} \inner{s}{x}$ provides the steepest descent direction. Specifically, for $i=1, \dots, d$, the optimal components are given by $x_i^* = - \sign(s_i)|s_i|^{q-1}/\norm{s}_{q}^{q-1}$ where $1/p + 1/q = 1$. We now provide concrete examples for the norms used in our framework.


\begin{example}[$\ell_\infty$-norm]
    For the $\ell_\infty$-norm, given weights $\gamma_i \ge 0$ for $i=1,\dots,d$, we have
        $\norm{s}_{*,\gamma} = \norm{\gamma \odot s}_1 = \sum_{i=1}^{d}\gamma_i |s_i|$.
\end{example}

\begin{example}[$S_\infty$-norm] 
\label{ex:schatten_infty}
For $X \in \R^{n \times m}$, let $\norm{X}=\sigma_1(X)$ ($S_\infty$-norm), $\norm{S}_1=\Tr(\sqrt{S^{T}S})$,  and $\inner{S}{X}=\Tr(S^{T}X)$. For a positive weight matrix $\Gamma \in \R^{n \times m}_{++}$, we have
$\norm{S}_{*, \Gamma} = \norm{\Gamma \odot S}_1$.
The maximizer is the orthogonal matrix $UV^{T}$ from the SVD of $(\Gamma \odot S)$.
\end{example}

\begin{example}[Preconditioned matrix seminorm \citep{veprikov2025preconditioned}]
For $X \in \R^{n \times m}$, when $\Gamma$ is a diagonal positive matrix $D \in \R^{n \times m}$, preconditioned matrix seminorm $\norm{S}_{*,D}:= \norm{D\odot S}_{*}$ is a special instance of $\Gamma$-weighted dual norm. 
\end{example}
\begin{example}[Nuclear rank of $S$ \citep{davis2025spectral}]
\label{ex:nuclear_rank}
For $X \in \R^{n \times m}$, when $\Gamma$ is a diagonal positive matrix $D \in \R^{n \times m}$ where $D_{ii}=1/\norm{S}_F$ for all $i$, $\norm{S}_{*,D}^2=\norm{S}_{*}^2/\norm{S}_F^2$ is the nuclear rank of $S$. 
\end{example}
\begin{remark}
When $S=\nabla f(X)$, the nuclear rank measurement (\cref{ex:nuclear_rank}) serves as a theoretical indicator for when spectral descent outperforms standard gradient descent \citep{davis2025spectral, shen2025convergence}.
\end{remark}

\begin{remark}
\label{rem:gamma_norm_bdd}
In the $S_\infty$-norm  case of \cref{ex:schatten_infty}, we have   $\min(\Gamma) \norm{S}_{1}\le\inner{S}{\msign(S)}_{\Gamma} \le  \inner{S}{\msign(\Gamma \odot S)}_{\Gamma}= \norm{S}_{1, \Gamma}$.
\end{remark}
\cref{rem:gamma_norm_bdd} justifies that we can consider $\inner{S}{\msign(\Gamma \odot S)}_{\Gamma}$ as a valid measure for the stationarity condition of the matrix-valued function in \cref{thm:devasf}.
\begin{figure*}[!t]
    \centering
    \includegraphics[width=\linewidth]{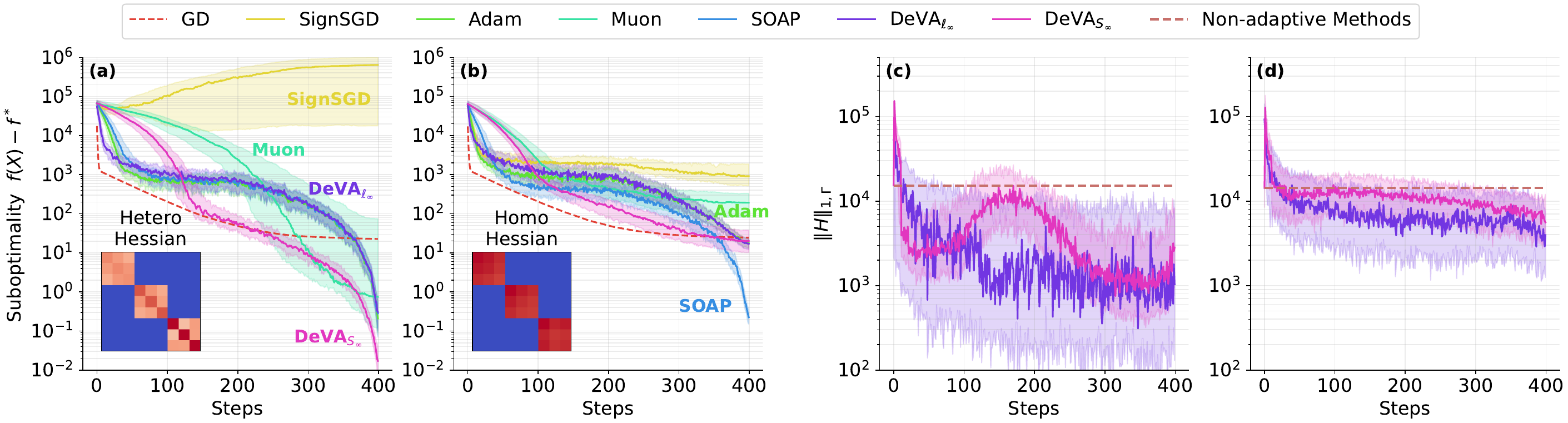}
    \caption{Trace Quadratic Function Optimization (\textit{median} and \textit{25\%/75\% quantiles} over 100 seeds). \textbf{(a)--(b)}: Training performance for various optimizers on 9-dimensional trace quadratic problems with homogeneous vs. heterogeneous Hessians (details see \cref{appsec:trace}). $\devasf$ significantly outperforms Muon in the heterogeneous setting. \textbf{(c)--(d)}: Dynamic of weighted dual norm $\norm{H}_{1,\Gamma}$ (see \cref{defn:gamma_norm}) for adaptive methods. Adaptive methods ($\devalf$, $\devasf$) effectively reduce $\|H\|_{1,\Gamma}$ as predicted by Theorems \ref{thm:devalf} and \ref{thm:devasf}, with a more pronounced reduction in the heterogeneous case \textbf{(c)} compared to the homogeneous case \textbf{(d)}.}
     \label{fig:trace-func-avg-plot}
\end{figure*}

\subsection{Convergence result for $\devalf$}
To facilitate analysis, we consider a simplified version of the \emph{vector}-case of  $\text{DeVA}_{\ell_\infty}$ where the momentum terms are zero ($\beta_1 = \beta_2 = 0$), resulting in the following update rule:
\begin{align}
\label{eq:devalf_simplified}
x_{t+1} = x_t - \eta \gamma_t \odot \text{sign}(g_t).
\end{align}
We first define the smoothness geometry below.
\begin{assumption}[Blockwise Smoothness]
\label[assumption]{ass:vec_lip}
The objective function $f: \R^d \to \R$ is blockwise $L_v$-smooth. Specifically, there exists a partition of the $d$ dimensions into $c$ blocks and a smoothness vector $L_v = [L_{v,1}, \dots, L_{v,c}] \in \mathbb{R}^c_{++}$ such that for any $x, y \in \mathbb{R}^d$ we have $f(y) \le f(x) + \inner{\nabla f(x)}{y-x} + \frac{1}{2}\sum_{i=1}^{c}L_{v,i} \norm{y_i-x_i}_2^2$, where $x_{i}$ is the $i$-th block.
\end{assumption}

    The blockwise smoothness assumption addresses the blockwise diagonal structure in Hessian frequently observed in neural networks \citep{ghorbani2019investigation,zhang2024adam}. This Hessian heterogeneity means that the standard choice of step size—namely, the inverse of a single global smoothness constant used in convex smooth analysis \citep{nesterov2018lectures}—is no longer optimal, motivating instead a blockwise step size that adapts to the local curvature of each block.
\begin{remark}
For notation consistency, we broadcast $L_{v,i}$ to each block to match the dimension of $\R^d$ in the later analysis.
\end{remark}

\begin{assumption}
\label[assumption]{ass:vec_grad_noise}
    Given mini-batch stochasticity $\xi$ and stochastic gradient $g := \nabla f(x,\xi) \in \R^d$, we assume an unbiased estimator $\E[g] = \nabla f(x)$. Furthermore, for any $i=1,\dots,d$, the variance is bounded such that $\E\left[\big(g_i - \nabla f(x_i)\big)^2\right] \le \sigma_i^2$.
\end{assumption}

\begin{remark}
In our analysis, we utilize a mini-batch stochastic gradient $g_t = \frac{1}{n_t} \sum_{j=1}^{n_t} \nabla f(x_t, \xi_j)$. This reduces the effective variance per block to $\sigma_i^2/n_t$. We use $\gF_{t}:=\sigma(\xi_1,\dots,\xi_t)$ to denote the natural filteration.
\end{remark}

\begin{restatable}{theorem}{vecconv}\label{thm:devalf}
    Under \cref{ass:vec_lip} and \cref{ass:vec_grad_noise}, for $\sigma_v ,L_v , \gamma_t \in \R^d$,  let $\Delta = f(x_1) - f^*$, for a fixed learning rate $\eta = 1/\sqrt{T}$, a mini-batch size $n_t = T$, suppose the weight process $\{\gamma_t\}^{d}_{>0}$ is bounded and $\gF_{t-1}$ measurable,  let $\gammahat = \frac{1}{T}\sum_{t=1}^{T}\gamma_t$ and $\gammahatsq = \frac{1}{T}\sum_{t=1}^{T}\gamma_t^2$, the iterates using \cref{eq:devalf_simplified} satisfy:
\begin{align*}
    \frac{1}{T}\sum_{t=1}^{T} \norm{\nabla f(x_t)}_{1,\gamma_t} \le \frac{1}{\sqrt{T}} \Big (\Delta + 2\norm{\sigma_v}_{1,\gammahat} + \frac{1}{2} \norm{L_v}_{1, \gammahatsq}\Big).
\end{align*}
\end{restatable}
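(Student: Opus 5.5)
We follow the standard signSGD analysis of \citet{bernstein2018signsgd}, adapted to the weights $\gamma_t$. Write $g_t$ for the mini-batch gradient and $\nabla_t := \nabla f(x_t)$. Apply \cref{ass:vec_lip} with $y = x_{t+1} = x_t - \eta\gamma_t\odot\sign(g_t)$. Since every coordinate of $\gamma_t\odot\sign(g_t)$ has magnitude $\gamma_{t,i}$, the quadratic term equals $\frac{\eta^2}{2}\sum_i L_{v,i}\gamma_{t,i}^2$, with $L_{v,i}$ broadcast coordinatewise. This gives
\begin{align*}
f(x_{t+1}) \le f(x_t) - \eta\sum_{i=1}^d \gamma_{t,i}\nabla_{t,i}\sign(g_{t,i}) + \frac{\eta^2}{2}\norm{L_v}_{1,\gamma_t^2}.
\end{align*}

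Next we split the linear term. For each $i$,
\[
\nabla_{t,i}\sign(g_{t,i}) = |\nabla_{t,i}| - 2|\nabla_{t,i}|\,\mathbb{1}[\sign(g_{t,i})\neq\sign(\nabla_{t,i})].
\]
Multiplying by $\gamma_{t,i}$ and summing gives
\[
-\eta\norm{\nabla_t}_{1,\gamma_t} + 2\eta\sum_i\gamma_{t,i}|\nabla_{t,i}|\,\mathbb{1}[\text{sign flip}].
\]
We take the conditional expectation given $\gF_{t-1}$. The key use of $\gF_{t-1}$-measurability is that $\gamma_t$ can be pulled out of this expectation, while $g_t$ depends on fresh samples. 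A sign flip requires $|g_{t,i}-\nabla_{t,i}|\ge|\nabla_{t,i}|$. Hence
\[
|\nabla_{t,i}|\,\Pr[\text{flip}] \le \E|g_{t,i}-\nabla_{t,i}| \le \sigma_i/\sqrt{n_t},
\]
using Markov's inequality followed by Jensen, or directly Jensen together with \cref{ass:vec_grad_noise} and the variance reduction in the mini-batch remark. The error term is therefore at most $2\eta\sum_i\gamma_{t,i}\sigma_i/\sqrt{n_t} = 2\eta\norm{\sigma_v}_{1,\gamma_t}/\sqrt{n_t}$.

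Finally we telescope. Summing over $t=1,\dots,T$, taking total expectation, and using $f(x_{T+1})\ge f^*$ gives
\[
\eta\sum_t\E\norm{\nabla_t}_{1,\gamma_t} \le \Delta + \frac{2\eta}{\sqrt{n_t}}\sum_t\norm{\sigma_v}_{1,\gamma_t} + \frac{\eta^2}{2}\sum_t\norm{L_v}_{1,\gamma_t^2}.
\]
The weighted $\ell_1$ norms are linear in the nonnegative weights, so $\sum_t\norm{\sigma_v}_{1,\gamma_t} = T\norm{\sigma_v}_{1,\gammahat}$ and $\sum_t\norm{L_v}_{1,\gamma_t^2} = T\norm{L_v}_{1,\gammahatsq}$. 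Dividing by $\eta T$ and substituting $\eta = 1/\sqrt{T}$ and $n_t = T$ gives
\[
\frac{\Delta}{\sqrt T} + \frac{2}{\sqrt T}\norm{\sigma_v}_{1,\gammahat} + \frac{1}{2\sqrt T}\norm{L_v}_{1,\gammahatsq},
\]
which is exactly the stated bound, with the left side understood in expectation. The $\sigma$ term uses $1/\sqrt{n_t} = 1/\sqrt T$ and no longer carries $\eta$.

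The main obstacle is the sign-flip step. The bound on the flip probability holds only conditionally, so $\gamma_t$ must be independent of the current minibatch $\xi_t$; this is exactly why $\gF_{t-1}$-measurability is assumed. Note that $\gamma_t$ in \cref{alg:deva_linfty} depends on $m_t$, which is not $\gF_{t-1}$-measurable, so the theorem idealizes this. Boundedness of $\gamma_t$ is needed only to justify taking expectations and interchanging sums.
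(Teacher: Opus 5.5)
Your proposal is correct and follows essentially the same route as the paper's proof. Both use the blockwise-smoothness descent inequality with quadratic term $\frac{\eta^2}{2}\sum_i L_{v,i}\gamma_{t,i}^2$, the sign-mismatch decomposition, pulling the $\mathcal{F}_{t-1}$-measurable $\gamma_t$ out of the conditional expectation, the flip bound $|\nabla_i f(x_t)|\Pr[\text{flip}]\le \sigma_{v,i}/\sqrt{T}$, and telescoping with the averaged weights $\hat{\gamma}$ and $\hat{\gamma}^2$. Your Markov--Jensen justification of the flip bound fills in a step the paper only asserts (it writes $\sigma_{v,i}/T$ where $\sigma_{v,i}^2/T$ is meant). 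One caveat, which the paper shares: since $\hat{\gamma}$ is random, the right-hand side should also be read in expectation.
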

Theorem \ref{thm:devalf} generalizes the non-adaptive guarantees of \citep{bernstein2018signsgd} by replacing the standard smoothness error $\norm{L_v}_1$ with a weighted term $\norm{L_v}_{1, \hat{\gamma}^2}$. This new metric quantifies the alignment between blockwise smoothness and our curvature estimate, where $\hat{\gamma}_i^2$ corresponds to the $i$-th diagonal element of the inverse Fisher information approximation, $\mathbb{E}[\text{diag}(gg^T)]^{-1}$.

Unlike standard SignSGD (where $\gamma_t = \mathbf{1}_d$), our decoupled adaptive approach uses a weight process $\gamma_t \propto \mathbb{E}[\text{diag}(g_t g_t^T)]^{-1/2}$ to actively scale the smoothness parameters. In the case of $\hat{\gamma}_i^2 \le 1$, the error term $\hat{\gamma}_i^2 L_{v,i}$ effectively reduces the impact of blockwise smoothness, potentially lowering the total error $\norm{L_v}_{1, \hat{\gamma}^2}$ toward the dimension $d$ in optimal cases. Full proofs are available in Appendix \ref{appsec:proof_devalf}.

\subsection{Convergence result for $\devasf$}
Now turn our attention to the \emph{matrix case} and we consider the same type of simplified version of the $\devasf$ update rule with zero momentum ($\beta_1 = \beta_2 = \beta_3 = 0$):
\begin{align}
\label{eq:devasf_simplified}
    X_{t+1} = X_{t} - \eta \Gamma_t \odot \msign(G_t).
\end{align}
To analyze this matrix update, we extend the concept of blockwise smoothness to the matrix manifold.

\begin{assumption}
\label[assumption]{ass:matrix_lip}
For any matrices $X, Y \in \R^{n \times m}$, there exists a smoothness matrix $L_m \in \R^{n \times m}_{++}$ such that $f(Y) \le f(X) + \inner{\nabla f(X)}{Y-X} + \frac{1}{2}\norm{\sqrt{L_m} \odot (Y-X)}_{F}^2$, where $\norm{\cdot}_F$ denotes the Frobenius norm and $\sqrt{L_m}$ is the element-wise square root of $L_m$.
\end{assumption}
\begin{remark}
\cref{ass:matrix_lip} can be viewed as a specific instance of the blockwise smoothness in \cref{ass:vec_lip}, where each entry of the matrix is treated as an individual block ($c = nm$).
\end{remark}
\begin{restatable}{theorem}{matconv}
\label{thm:devasf}
    Under \cref{ass:matrix_lip} and \cref{ass:vec_grad_noise}, for $\sigma_m, L_m, \Gamma_t \in \R^{n \times m}$, let $r=\min(n,m)$ and $\Delta = f(X_1) -f^{*}$, for a learning rate $\eta = 1/\sqrt{T}$ and mini-batch size $n_t = T$, suppose the weight process $\{\Gamma_t\} \subset \R^{n \times m}_{>0}$ is bounded and $\gF_{t-1}$ measurable, let $\Gammahat=\sqrt{\frac{1}{T}\sum_{t=1}^{T}\Gamma_t^2}$ and $\Gammahatsq=\frac{1}{T}\sum_{t=1}^{T}\Gamma_t^2$. Defining the stationarity measure $\psi(G_t) = \E[\inner{G_t}{\msign (G_t)}_{\Gamma_t}]$, the iterates using \cref{eq:devasf_simplified} satisfy:
\begin{align*}
    \frac{1}{T}\sum_{i=1}^{T} \psi(G_t) \le \frac{1}{\sqrt{T}} \Big (\Delta + \sqrt{r}\norm{\sigma_m}_{F, \Gammahat} + \frac{1}{2} \norm{L_m}_{\ell_1,\Gammahatsq}\Big).
\end{align*}
\end{restatable}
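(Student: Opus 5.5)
We follow the descent-lemma template behind \cref{thm:devalf}, now with \cref{ass:matrix_lip} and the $\Gamma$-weighted inner product. Write $\nabla_t := \nabla f(X_t)$ and $D_t := \Gamma_t \odot \msign(G_t)$. Apply \cref{ass:matrix_lip} with $Y = X_{t+1} = X_t - \eta D_t$:
\begin{align*}
f(X_{t+1}) \le f(X_t) - \eta \inner{\nabla_t}{\Gamma_t\odot \msign(G_t)} + \frac{\eta^2}{2}\norm{\sqrt{L_m}\odot\Gamma_t\odot\msign(G_t)}_F^2 .
\end{align*}
\textbf{Smoothness term.} Every entry of $\msign(G_t)=UV^T$ has absolute value at most one, because the rows of $U$ and $V$ have norm at most one. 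Hence the quadratic term is at most $\frac{\eta^2}{2}\sum_{ij}(L_m)_{ij}(\Gamma_t)_{ij}^2$. Averaged over $t$, this becomes $\frac{\eta^2}{2}\norm{L_m}_{\ell_1,\Gammahatsq}$ per step.

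\textbf{Linear term.} Note that $\inner{\nabla_t}{\Gamma_t\odot \msign(G_t)}$ is not literally $\inner{\nabla_t}{\msign(G_t)}_{\Gamma_t}$ under \cref{defn:gamma_norm}; I read the $\Gamma$-weighted pairing as the one produced by the update, i.e.\ weights entering linearly. Split $\nabla_t = G_t - (G_t-\nabla_t)$. This gives $-\eta\inner{G_t}{\Gamma_t\odot\msign(G_t)} + \eta\inner{G_t-\nabla_t}{\Gamma_t\odot\msign(G_t)}$. The first piece produces the stationarity measure $\psi(G_t)$ after taking expectations.

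For the noise piece, bound it by Cauchy--Schwarz entrywise:
\[
\inner{(G_t-\nabla_t)\odot\Gamma_t}{\msign(G_t)} \le \norm{\Gamma_t\odot(G_t-\nabla_t)}_F\,\norm{\msign(G_t)}_F .
\]
The second factor is at most $\sqrt r$, since $\msign(G_t)$ has at most $r$ unit singular values. Because $\Gamma_t$ is $\gF_{t-1}$-measurable, Jensen's inequality and \cref{ass:vec_grad_noise} with mini-batch size $n_t=T$ give
\[
\E\norm{\Gamma_t\odot(G_t-\nabla_t)}_F \le \Big(\sum_{ij}(\Gamma_t)_{ij}^2(\sigma_m)_{ij}^2/T\Big)^{1/2}.
\]

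\textbf{Telescoping.} Sum over $t$, take total expectation, and use $f(X_{T+1})\ge f^*$:
\[
\eta\sum_t\psi(G_t)\le \Delta+\eta\sqrt{r}\sum_t \E\norm{\Gamma_t\odot(G_t-\nabla_t)}_F+\frac{\eta^2}{2}\sum_t\norm{L_m}_{\ell_1,\Gamma_t^2}.
\]
Apply Cauchy--Schwarz over $t$ to the noise sum: $\sum_t(\sum_{ij}\Gamma_{t,ij}^2\sigma_{ij}^2/T)^{1/2}\le \sqrt{T}\,(\sum_{ij}\Gammahat_{ij}^2\sigma_{ij}^2)^{1/2}=\sqrt{T}\,\norm{\sigma_m}_{F,\Gammahat}$. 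Here $\Gammahat$ is the root-mean-square weight, which explains its definition in the statement. Dividing by $\eta T$ with $\eta=1/\sqrt T$ turns the three terms into $\Delta/\sqrt T$, $\sqrt r\,\norm{\sigma_m}_{F,\Gammahat}/\sqrt T$, and $\frac{1}{2\sqrt T}\norm{L_m}_{\ell_1,\Gammahatsq}$, which is the claim.

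\textbf{Main obstacle.} The delicate step is the linear term: making the weighted pairing consistent, and arranging that the noise is bounded by a Frobenius rather than nuclear-type quantity so that exactly the factor $\sqrt r$ appears. I would first try the Cauchy--Schwarz route above. If the convention in \cref{defn:gamma_norm} instead forces $\sqrt{\Gamma}$ weights, I would absorb the discrepancy by redefining $\psi$ through the update's pairing.
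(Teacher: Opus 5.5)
Your proposal is correct and follows the paper's proof essentially step for step: the descent inequality from \cref{ass:matrix_lip}, the entrywise bound $|\msign(G_t)_{ij}|\le 1$, the split $\nabla f(X_t)=G_t-E_t$, a $\sqrt{r}\,\norm{\Gamma_t\odot E_t}_F$ bound on the noise pairing, conditional Jensen, and telescoping. The only deviations are that you reach $\sqrt{r}\,\norm{\Gamma_t\odot E_t}_F$ via Frobenius Cauchy--Schwarz with $\norm{\msign(G_t)}_F\le\sqrt{r}$ rather than the paper's trace-duality bound by the nuclear norm followed by $\norm{\cdot}_1\le\sqrt{r}\norm{\cdot}_F$, and that you make explicit the Cauchy--Schwarz over $t$ which the paper uses silently to pass from $\frac{1}{T}\sum_t\norm{\sigma_m}_{F,\Gamma_t}$ to $\norm{\sigma_m}_{F,\Gammahat}$; your hesitation about the weighted pairing is also unnecessary, since by \cref{defn:gamma_norm} $\inner{s}{x}_\gamma=\inner{s\odot\sqrt{\gamma}}{x\odot\sqrt{\gamma}}=\sum_i\gamma_i s_i x_i$, so $\inner{\nabla f(X_t)}{\Gamma_t\odot\msign(G_t)}$ is literally $\inner{\nabla f(X_t)}{\msign(G_t)}_{\Gamma_t}$ and no redefinition of $\psi$ is needed.
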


\cref{thm:devasf} provides a novel convergence guarantee for adaptive matrix-sign methods, revealing a controllable blockwise smoothness $\norm{L_m}_{\ell_1,\Gammahatsq}$ in the final bound, mirroring the result of the vector case in \cref{thm:devalf}. 

\begin{figure}[!t]
    \centering
    \includegraphics[width=0.94\linewidth]{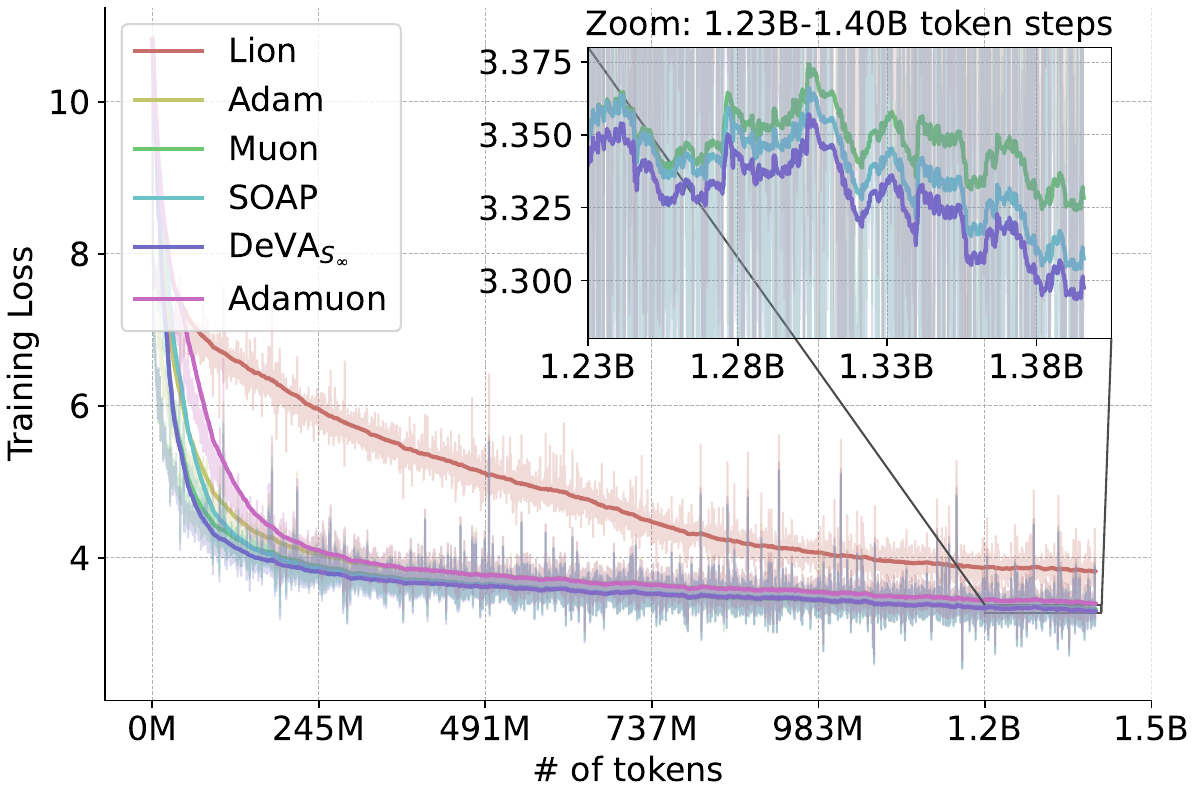}
    \caption{EMA-smoothed training loss for a 274M NanoGPT model on FineWeb. All methods use a uniform learning rate of $0.001$. $\devasf$ consistently outperforms baseline optimizers throughout the training duration.}
    \label{fig:nano_gpt_loss}
\end{figure}

If we remove the adaptive step size ($\Gamma_t = 1_{n \times m}$), the stationarity measure reduces to the nuclear norm $\psi(G_t) = \E[\norm{G_t}_1]$, which is an upper bound on the nuclear norm of the true gradient, i.e., $\E[\norm{\nabla f(X_t)}_1] \le \E[\norm{G_t}_{1}]$. This leads to the following result for non-adaptive spectral descent:

\begin{corollary}
\label{cor:devasf}
Under the settings of Theorem \ref{thm:devasf}, for $\sigma_m, L_m\in \R^{n \times m}$, if $\Gamma_t = 1_{n \times m} $ for all $t$, let $\eta =\sqrt{\frac{2\Delta}{T \norm{L_{m}}_{\ell_1}}}$ and $n_t=\frac{r \norm{\sigma_m}_F^2}{2\Delta \norm{L_m}_{\ell_1}}T$, we have
\begin{align*}
    \frac{1}{T}\sum_{i=1}^{T} \E[\norm{\nabla f(X_t)}_1] \le \frac{2}{\sqrt{T}} \Big ( \sqrt{2\Delta} \cdot \sqrt{ \norm{L_m}_{\ell_1}}\Big).
\end{align*}
\end{corollary}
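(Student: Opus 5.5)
We obtain \cref{cor:devasf} by re-running the one-step argument behind \cref{thm:devasf} with $\Gamma_t = 1_{n\times m}$, keeping $\eta$ and $n_t$ free, and choosing them only at the end. Quoting the final bound of \cref{thm:devasf} directly would not suffice, since there $\eta = 1/\sqrt{T}$ and $n_t = T$ are fixed.

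\textbf{Descent inequality.} With $\Gamma_t=1_{n\times m}$ the update is $X_{t+1}=X_t-\eta\,\msign(G_t)$. Every entry of $\msign(G_t)=UV^T$ has absolute value at most $1$, so \cref{ass:matrix_lip} gives
\begin{align*}
f(X_{t+1}) \le f(X_t) - \eta\inner{\nabla f(X_t)}{\msign(G_t)} + \tfrac{\eta^2}{2}\norm{L_m}_{\ell_1}.
\end{align*}
We split $\nabla f(X_t) = G_t - (G_t-\nabla f(X_t))$ and use $\inner{G_t}{\msign(G_t)}=\norm{G_t}_1$. The noise cross term is controlled by
\begin{align*}
|\inner{G_t-\nabla f(X_t)}{\msign(G_t)}| \le \norm{G_t-\nabla f(X_t)}_1 \le \sqrt{r}\,\norm{G_t-\nabla f(X_t)}_F .
\end{align*}
Taking expectations and applying Jensen together with \cref{ass:vec_grad_noise} at mini-batch size $n_t$ bounds this by $\sqrt{r}\,\norm{\sigma_m}_F/\sqrt{n_t}$.

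\textbf{Telescoping.} Summing over $t$, dividing by $\eta T$, and using $\E\norm{\nabla f(X_t)}_1\le \E\norm{G_t}_1$ (Jensen, since the gradient is unbiased) gives
\begin{align*}
\frac{1}{T}\sum_{t=1}^T \E\norm{\nabla f(X_t)}_1 \le \frac{\Delta}{\eta T} + \frac{\eta}{2}\norm{L_m}_{\ell_1} + \frac{\sqrt{r}\,\norm{\sigma_m}_F}{\sqrt{n_t}} .
\end{align*}

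\textbf{Choosing the parameters.} The first two terms are balanced by $\eta=\sqrt{2\Delta/(T\norm{L_m}_{\ell_1})}$, and each then equals $\sqrt{\Delta\norm{L_m}_{\ell_1}/(2T)}$. With $n_t=\frac{r\norm{\sigma_m}_F^2}{2\Delta\norm{L_m}_{\ell_1}}T$, the noise term becomes $\sqrt{2\Delta\norm{L_m}_{\ell_1}/T}$.

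The total is therefore $2\sqrt{2\Delta\norm{L_m}_{\ell_1}}/\sqrt{T}$, which is exactly the stated bound. The only delicate step is the noise term. Because $\msign(G_t)$ depends on the noise, the cross term does not vanish in expectation, so it must be bounded in absolute value via the nuclear--Frobenius inequality, which produces the factor $\sqrt{r}$. The remaining steps are routine bookkeeping.
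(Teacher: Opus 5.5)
Your proof is correct and takes essentially the same route as the paper: rerun the descent argument from the proof of \cref{thm:devasf} with $\Gamma_t = 1_{n\times m}$ and $\eta$, $n_t$ left free, bound the noise cross term by $\sqrt{r}\,\norm{\sigma_m}_F/\sqrt{n_t}$ via H\"older and the nuclear--Frobenius inequality, use $\E\norm{\nabla f(X_t)}_1 \le \E\norm{G_t}_1$, and substitute the stated $\eta$ and $n_t$. Your arithmetic also checks out: the two balanced terms sum to $\sqrt{2\Delta\norm{L_m}_{\ell_1}/T}$ and the noise term adds the same amount, giving exactly $2\sqrt{2\Delta\norm{L_m}_{\ell_1}}/\sqrt{T}$.
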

This corollary shows that without variance adaptation, spectral descent achieves a batch complexity of $O( \Delta  \norm{L_m}_{\ell_1} \epsilon^{-2})$ to reach $\epsilon$-stationarity. Unlike $n_t=1$ and a non-zero first-order momentum coefficient $\beta_1$ in \citep{shen2025convergence}, a large batch size $n_t =O(r\norm{\sigma_m}_{F}^2\epsilon^{-2})$ is required in our case and the total sample complexity will be $O(r \Delta \norm{\sigma_m}_{F}^2 \norm{L_m}_{\ell_{1}} \epsilon^{-4})$. This result matches the recent complexity established for Muon in \citep[Corollary 4.2]{shen2025convergence}, while being more general in explaining how decoupled variance adaptation accelerates the convergence rate. The detailed proof is provided in Appendix \ref{appsec:proof_devasf}.

\section{Experiment}
\label{sec:experiment}
Our implementation for computing the spectral-norm constrained update follows the Newton-Schultz iteration proposed in \citep{jordan2024muon}. In addition, the matrix sign function is multiplied by an RMS alignment in \citep{team2025kimi} for both Muon and $\devasf$. All the following experiments are conducted on a single 40GB A100 GPU.

\begin{figure}[!t]
    \centering
    \includegraphics[width=0.9\linewidth]{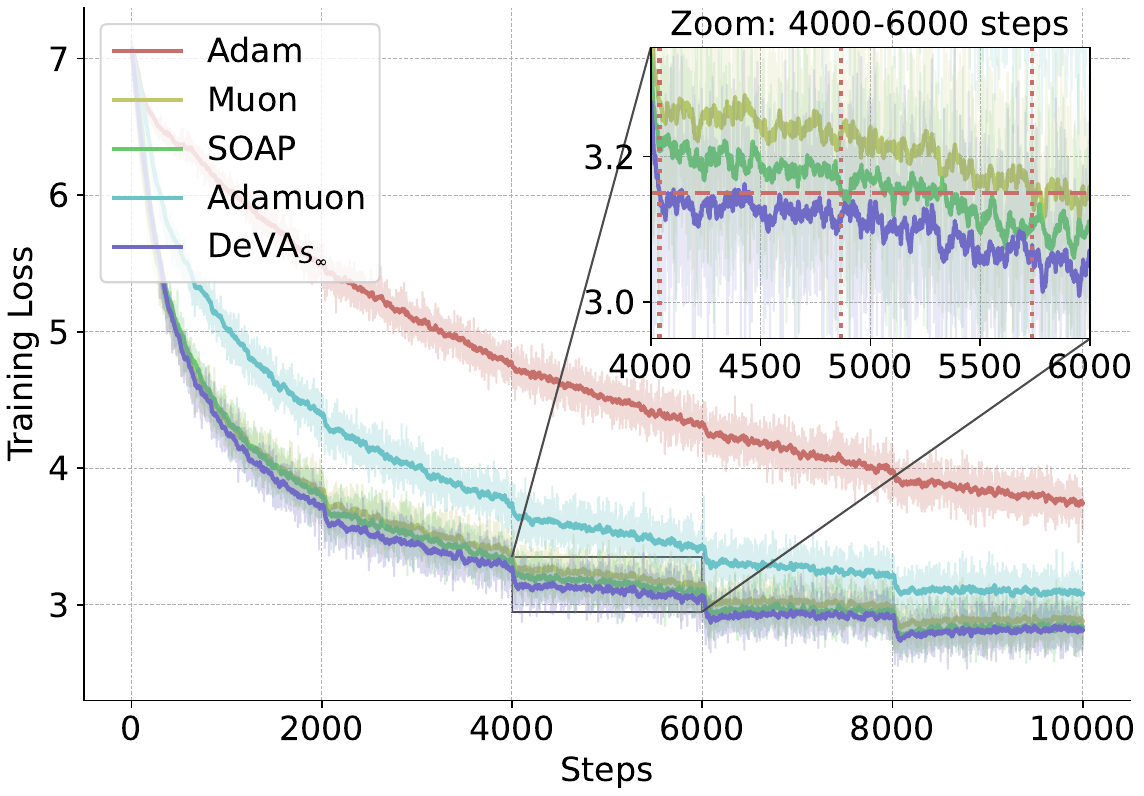}
    \caption{EMA-smoothed training loss for ViT-L/16 on ImageNet-1K during the first five epochs. All methods utilize a fixed learning rate of $0.001$ without decay. $\devasf$ demonstrates significantly faster initial convergence compared to other optimizers.}
    \label{fig:imagenet_loss}
\end{figure}

\subsection{Main Results}

\paragraph{Quadratic Trace Function Optimization.} 
Neural network Hessians typically exhibit a block-diagonal structure \cite{ghorbani2019investigation}. Recent work \citep{zhang2024adam, orvieto2025search} has utilized quadratic functions $f(x) =\frac{1}{2}x^{T}Hx$ with heterogeneous eigenspectra to illustrate why adaptive approaches are necessary in such settings. We extend this analysis to the matrix domain by considering the minimization of a quadratic trace function: $f(X)=\frac{1}{2}\Tr(X^{T}HX)$ where $X, H \in \R^{d \times d}$ and $H$ is a fixed matrix with a heterogeneous/homogeneous eigenspectrum. This simple optimization problem is of particular interest, as In-Context Learning (ICL) \citep{garg2022can} of linear transformer can be framed as a special instance of the quadratic trace problem \citep{ma2026preconditioning}. Detailed notes for this example  are provided in Appendix \ref{appsec:trace}.

As shown in \cref{fig:trace-func-avg-plot}, deterministic gradient descent exhibits identical convergence rates across both homogeneous and heterogeneous functions. In contrast, the performance gain of $\devasf$ over the non-adaptive Muon mirrors the advantage of adaptive vector methods ($\devalf$, Adam) over the SignSGD. Notably, SOAP demonstrates similar convergence behaviors on both function types. We conjecture this occurs because SOAP lacks an explicit decoupling between the matrix sign update and the adaptive step size; consequently, it avoids the stagnant convergence patterns often observed in pure sign-based optimizers. Finally, \cref{fig:trace-func-avg-plot} reveals a significant reduction in the smoothness-related error $\norm{H}_{1,\Gamma}$ when the function is Hessian-heterogeneous. This empirical reduction validates our theoretical intuition and underscores the importance of adaptive spectral methods for large-scale neural network training.

\paragraph{GPT.} We evaluate our methods using the modded-nanogpt codebase \citep{modded_nanogpt_2024}, which incorporates modern architectural enhancements: rotary embeddings, RMSNorm, a linear decay schedule, and $\text{ReLU}^2$ activations. The snapshot we used includes three additional value embedding layers following the initial transformer blocks and an untied output head, bringing the total parameter count to 274M. We pretrain on the FineWeb-Edu dataset \citep{penedo2024fineweb}; detailed hyperparameter sweeping experiments over learning rate and shampoo's $\beta$ are available in Appendix \ref{appsec:modded_nanogpt}. As shown in \cref{fig:nano_gpt_loss} and \cref{tab:nanogpt}, $\devasf$ consistently outperforms both Muon and SOAP in validation perplexity.

\begin{figure}[t]
    \centering
    \includegraphics[width=0.95\linewidth]{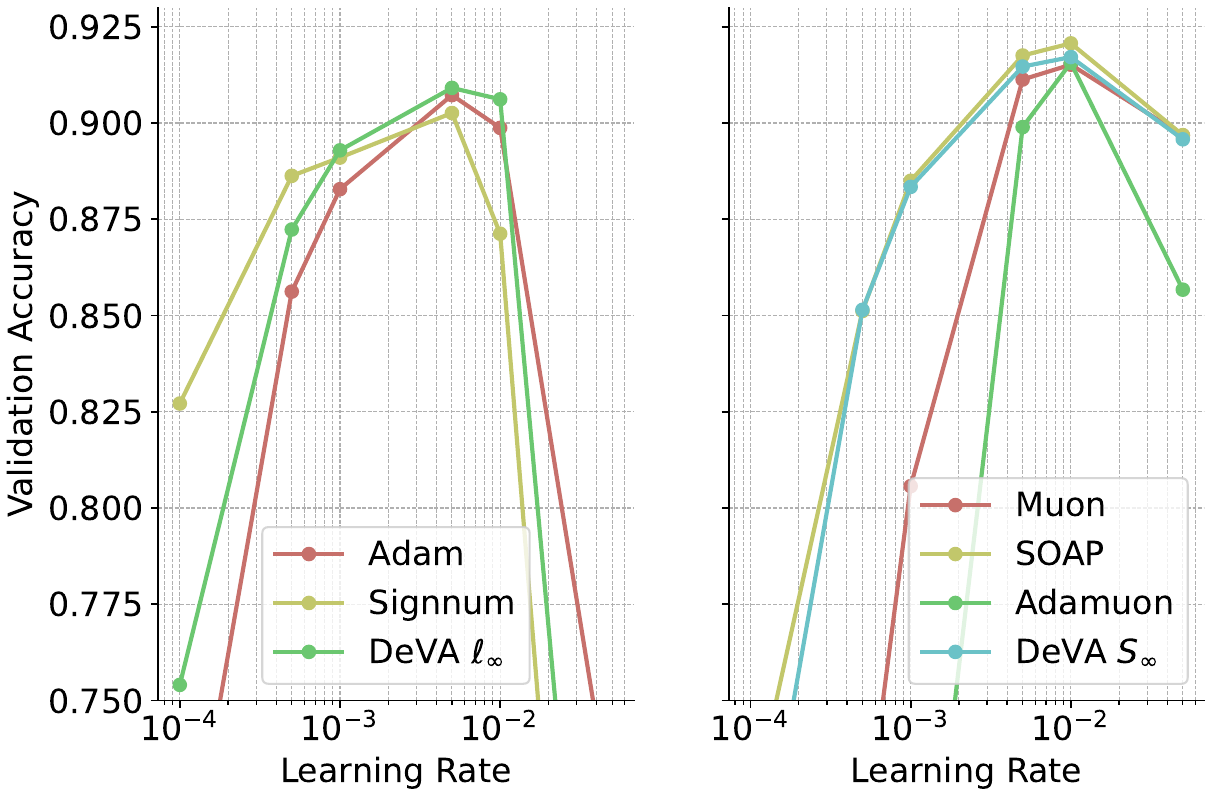}
    \caption{ResNet-20 validation accuracy on CIFAR-10 after 40 epochs. Optimal learning rates cluster near $0.005$ for vector methods (\textbf{left}) and $0.01$ for matrix methods (\textbf{right}), where $\devalf$ and SOAP achieve peak performance, respectively.}
    \label{fig:cifar10_optimal_lrs}
\end{figure}

\begin{table}
    \centering
       \caption{Final validation perplexity and training runtime (ms/1k tokens) for a 270M NanoGPT model pretrained on 1.4B tokens. Values in $(\pm\%)$ indicate relative gain or loss compared to Muon.}
    \begin{tabularx}{0.9\linewidth}{llXX}
    \toprule 
         & Muon & SOAP & $\devasf$ \\  \midrule 
    Loss & 3.305 & 3.304 \textcolor{gray}{(-0.0\%)} & 3.271 \textcolor{gray}{(-1\%)} \\
    Time & 10.72s & 15.79s \textcolor{gray}{(+47\%)} & 16.49s \textcolor{gray}{(+54\%)} \\ \bottomrule
    \end{tabularx}
    \label{tab:nanogpt}
\end{table}

\paragraph{Image Classification.}
We evaluate performance on vision transformers (ViT-L/16 \citep{dosovitskiy2020image}) using ImageNet-1K and convolutional neural networks (CNNs) on CIFAR-10.  ImageNet-1K training trajectories are illustrated in \cref{fig:imagenet_loss}.  As shown in \cref{fig:cifar10_optimal_lrs}, following RMS alignment \citep{team2025kimi}, the optimal learning rate consistently transfers across both vector and matrix optimizers, stabilizing between $0.005$ and $0.01$. For exhaustive experimental configurations, refer to Appendices \ref{appsec:cifar-10} and \ref{appsec:imagenet-1k}.

\section*{Acknowledgements}
We thank Yufan Huang for proofreading the manuscript. DFG would like to acknowledge DOE DE-SC0023162 Sparsitute MMICC center for partial support as well as NSF Nonlinear graph IIS-2007481. ZZ would like to acknowledge support from N000142412621, Office of Naval Research. CSB and BB would like to acknowledge support from NSF IIS-2420724.

\section*{Impact Statement}

This paper presents work whose goal is to advance the field of Machine
Learning. There are many potential societal consequences of our work, none
which we feel must be specifically highlighted here.


\bibliography{example_paper}
\bibliographystyle{icml2026}

\newpage
\appendix
\onecolumn
\begin{center}
    {\large \bf Appendix} 
    \vspace{1em} 
\end{center}

\addtocontents{toc}{\protect\setcounter{tocdepth}{2}}
\tableofcontents

\newpage

\section{Related Work}
\label{appsec:related_work}

In this section, we provide a structured overview of the optimization landscape relevant to our work, focusing on how different geometric perspectives influence convergence and efficiency in deep learning. We first review the history of adapting vector-based ideas to matrix variables in the domain of semi-definite programming.  We next examine Gradient Descent under Non-Euclidean Norms, tracing the evolution from coordinate-wise sign methods to generalized $\ell_p$ geometries. We then discuss the emerging paradigm of Spectral Geometry and Modular Norms, which utilizes operator-norm updates to better align with the architectural properties of large-scale models. Finally, we survey Curvature-aware Methods, reviewing how local Hessian and Fisher information are efficiently approximated to navigate the complex loss surfaces of modern neural networks.

\paragraph{From vectors in linear programming to matrices in semi-definite programming}
Machine learning was a key driver of advanced semi-definite programming through the early 2000s. 
Semi-definite programming arises when a linear objective on a vector of variables is generalized to a linear objective on a matrix of variables, with the vector non-negativity constraints replaced by positive semi-definiteness of the matrix~\citep{Boyd2004}. 
Many SDPs in machine learning arose as \emph{convex} relaxations of combinatorial objectives, following the general paradigm exemplified by classical relaxations such as MaxCut~\citep{Goemans1995}. 
Famous examples of SDPs in ML include matrix completion with nuclear norms~\citep{Candes2009,Recht2010}, which helped popularize the nuclear norm as the matrix-based analogue of the $1$-norm in compressed sensing and sparse optimization. 
Related SDP formulations also appeared in manifold learning, embedding, and clustering problems~\citep{Weinberger,pmlr-v2-kulis07a}, motivated advanced algorithms for regularized graph diffusions~\citep{Orecchia-2011-implicit-regularization}, and more recently lower-bound on graph computations~\citep{Huang-2023-mucond-lrsdp}. A key modeling step in much of this research was adapting vector-based ideas to matrix variables, and our work studies descent algorithms within this framework.

\paragraph{Gradient Descent under non-Euclidean norm}
Starting from SignSGD \citep{bernstein2018signsgd}, a significant line of research has studied sign-based methods, which can be viewed as (stochastic) steepest descent under the $\ell_{\infty}$ norm. This extends the classical $\ell_{2}$-based regime used in vanilla SGD \citep{ghadimi2013stochastic}. By scaling down sparse
noise and scaling up dense gradients comparatively, sign updates benefit learning when the gradients are dense but contain a sparse set of extremely noisy components \citep{bernstein2018signsgd}. 

Recent work on Lion \citep{chen2023symbolic} and its generalization Lion-$\mathcal{K}$ \citep{chen2023lion} further underscores the empirical benefits of sign-driven coordinates in large-scale tasks. More recently, \citet{luo2025stacey} interpolate between the extremes of non-Euclidean geometry ($\ell_2$ and $\ell_\infty$) by considering general $\ell_p$ norms for $2 < p < \infty$, through which a more favorable Lipschitz constant $L_p$ can be leveraged in high dimensions.

\paragraph{Spectral Geometry and Modular Norms}
Aside from $\ell_p$ norms, there is an increasing trend in research toward using spectral geometry for training deep learning models. In the Schatten-$\infty$ norm, the training loss of RBMs and CNNs admits a tighter majorization bound than in the Frobenius norm, motivating steepest descent in the spectral norm rather than the Euclidean norm \citep{carlson2015stochastic, carlson2015preconditioned}.

Muon \citep{jordan2024muon} applies a similar spectral update—implemented with a Newton–Schulz approximation to the polar factor—to the hidden layers of modern architectures and has shown strong empirical performance for NanoGPT-scale models and other large-scale training tasks \citep{wen2025fantastic, liu2025muon}. The Scion optimizer \citep{pethick2025training} builds a general framework of stochastic conditional gradient (LMO-based) methods with norm constraints; within this framework, Muon is a special case, while Scion allows for the use of operator norms (e.g., RMS, spectral, and $\ell_{\infty}$ norms) across different layers.

Building upon spectral and $\ell_{\infty}$ norms, the new concept of the modular norm \citep{bernstein2024modular, bernstein2024old, large2024scalable} has been proposed to unify weights with different geometries in large-scale networks and popularize spectral and operator-norm updates. Recent research has also shown that spectral descent can outperform vanilla gradient descent when the matrix gradient has a high nuclear-to-Frobenius ratio \citep{shen2025convergence, davis2025spectral}, aligning with the dense gradient analysis for vector gradients in SignSGD \citep{bernstein2018signsgd}.

\paragraph{Curvature-aware Methods}

In parallel, a variety of optimizers capture the intricate geometry of the loss surface by coupling parameter updates through local curvature information. Updates leveraging second-order information can achieve significantly more progress per iteration than simple scaled gradients. At the convex settings, preconditioned gradient descent methods \citep{gonen2016solving, frangella2024promise} were among the first to utilize this geometric information. 
Subsequent Hessian estimation methods \citep{byrd2016stochastic, roosta2019sub, frangella2024sketchysgd} maintain stochastic online approximations of the Hessian for neural networks. More recently, \citet{depavia2025faster} showed that adaptive optimization methods can be accelerated via a change-of-basis technique named EGOP reparameterization that reduces the stable rank of a convex objective on deterministic optimization. For nonconvex matrix factorization problems, preconditioning has further emerged as a powerful accelerated optimization tool for both exactly and overly parameterized settings \citep{tong2021accelerating, zhang2023preconditioned}. However, updates involving local curvature can be computationally prohibitive, as they typically require matrix inversions or matrix-vector products with the curvature matrix. Consequently, there is a strong demand for efficient methods that directly compute the inverse of high-quality curvature approximations.

Notable examples include K-FAC \citep{martens2015optimizing} and Shampoo \citep{gupta2018shampoo}, which approximate the Fisher information matrix via Kronecker products, and SOAP \citep{vyas2024soap}, which approximates the Hessian using an optimal Kronecker product. To further mitigate the computational overhead, ASGO \citep{an2025asgo} only keeps a single-side preconditioner based on Shampoo's preconditioner. To go beyond that, Adam \citep{adam2014method} maintains only the diagonal elements of the Gauss-Newton matrix.

To explain the success of such diagonal approximations, extensive prior work has shown that the Hessian of a neural network tends to exhibit a block-diagonal structure \citep{ghorbani2019investigation, zhang2024transformers, zhang2024adam}, where each block corresponds to an individual layer or neuron. Furthermore, numerous studies \citep{wu2020dissecting, yao2020pyhessian} have observed that neural network Hessians are typically low-rank; their effective rank—measured by the ratio $\|H\|_{*} / \|H\|_{op}$—is often significantly smaller than their ambient dimensionality.

To further accelerate spectral descent, particularly for optimizers like Muon, recent methods have begun incorporating second-order information. Optimizers such as Adamuon \citep{si2025adamuon}, Normuon \citep{li2025normuon}, AdaGO \citep{zhang2025adagrad}, and COSMOS \citep{liu2025cosmos} integrate Adam-style adaptive stepsizes into the Muon update. Concurrent work \citep{eschenhagen2026clarifying} also shows that Shampoo achieves higher token efficiency than Muon, mirroring Adam's advantage over Signum. However, the fundamental question regarding the theoretical origin and justification of second-order stepsizes within spectral gradient descent frameworks remains an open challenge.
\newpage
\section{Trace Quadratic Example}
\label{appsec:trace}
Our setup here is inspired directly from the results and discussions in \citep{zhang2024adam,orvieto2025search}. In particular, we consider the problem of minimizing the following function $f : \R^{d \times d} \to \R$:
\begin{align*}
    f(X) = \frac{1}{2} \Tr (X^TH X).
\end{align*}
It is not to difficult to verify that $\nabla f(X)=\frac{1}{2}(H+H^{T})X$ and $\nabla^2 f(X) = I_{d \times d} \otimes \frac{1}{2}(H+ H^T)$. Thus, in order to construct a Heterogeneous (Homogeneous) Hessian for $f(X)$, we just need to construct a Heterogeneous (Homogeneous) $H_{hom}$ and $H_{het}$. 

Specifically, we fix the eigenvalues of  $H_{hom}$ and $H_{het}$, i.e., $\eig (H_{hom})=\eig (H_{het})=\{1,2,3,99,100,101, 4998,4999,5000\}$. We choose both Hessians to be block-diagonal, with 3 by 3 blocks size. The homogeneous $H_{hom}$ has eigenvalues of different magnitudes in each block, while the Heterogeneous $H_{her}$ keeps similar magnitudes in each block, i.e., 
\begin{align*}
    H_{hom} &= [[1,2,3], [99,100,101], [4998,4999,5000]] \\
    H_{het} &= [[1,99,4998],[2,100,4999],[3,101,5000]].
\end{align*}
To add randomness, we generate a 3 by 3 random rotation matrix $Q_{3 \times 3}$ to the block matrix of $H_{het}$ and $H_{hom}$.  Each rotation is sampled from the decomposition of a random 3 by 3 positive semidefinite matrix $AA^{T}$, where $A \in \R^{3 \times 3}$ has i.i.d. Gaussian entries.

One may notice that $H_{hom}$ is indeed a permuted version of $H_{het}$, thus, running deterministic gradient descent on the two variants should show no discrepancy. However, under the stochastic case, the empirical learning process between the two variants demonstrated a significant difference \citep{zhang2024adam,orvieto2025search}. To introduce stochasticity in the gradient, we follow \citep{orvieto2025search} to take the square root of the Hessian to define a 9 by 9 design $A$, i.e., $H=A^TA$. In practice, we use Cholesky decomposition for $H$ to obtain $A$ efficiently. Then the target function can be rewritten as
\begin{align*}
    f(X)=\frac{1}{2}\Tr (X^{T}A^TAX) = \frac{1}{2} \norm{AX}_F^2.
\end{align*}
Write row of $A$ as $a_i^{T}$, then
\begin{align*}
    f(X) =\frac{1}{2} \sum_{i=1}^d \norm{a_i^{T} X}_2^2,
\end{align*}
each row is like one ``sample'' and we can use the Kaczmarz randomized algorithm to sample the gradient of $f$.

In practice, all optimizers in \cref{fig:trace-func-avg-plot} except Gradient descent use moving average parameters $\beta_1$ set to zero and $\beta_2$ set to $0.99$ to emphasize the effect of second-order adaptation. A learning rate scheduler with 50\% warmup followed up by a linear decay to zero is implemented for optimizers except gradient descent.

\newpage
\section{Proofs for \cref{sec:methods}}
\subsection{Proof for \cref{thm:deva_coordinate_wise}}
\label{appsec:proof_deva_co}
\devasfde*
\begin{proof}
Recall when $H=L^{1/2} \otimes R^{1/2}$, DeVA gives us
\begin{align}
    \fvec (\Delta X^{*}) 
    &= -\E[(L^{1/2} \otimes R^{1/2})]^{-1/2} (L^{1/2} \otimes R^{1/2})^{1/2} \cdot  (L^{1/2} \otimes R^{1/2})^{-1/2} \fvec (G) \\
    &= -\E[(L \otimes R) ^{1/2}]^{-1/2} (L \otimes R)^{1/4}\fvec(\msign (G))
\end{align}
Let $\mathcal{K} = \Lambda_L \otimes \Lambda_R$ be the diagonal matrix of Kronecker eigenvalues. Substituting the eigendecompositions $L = Q_L^{} \Lambda_L^{} Q_L^T$ and $R = Q_R^{} \Lambda_R^{} Q_R^T$ into the expectation, we have:
\begin{equation}
\mathbb{E}[L^{1/2} \otimes R^{1/2}] = (Q_L \otimes Q_R) \mathbb{E}[\mathcal{K}^{1/2}] (Q_L^T \otimes Q_R^T),
\end{equation}
where we apply our local stability assumption that the eigenbases $Q_L, Q_R$ are ``locally stable'' such that the expectation acts only on the eigenvalues. The vectorized update $\text{vec}(\Delta X^*)$ is given by:
\begin{equation}
\text{vec}(\Delta X^*) = -(Q_L \otimes Q_R) \mathbb{E}[\mathcal{K}^{1/2}]^{-1/2} (\mathcal{K}^{1/2})^{1/2} (Q_L^T \otimes Q_R^T) \text{vec}(\msign(G)),
\end{equation}
Using the property that for orthogonal $Q$, $\msign(Q^T G Q) = Q^T \msign(G) Q$, and applying the Kronecker identity $(A \otimes B)\text{vec}(X) = \text{vec}(AXB^T)$, we write:
\begin{equation}
\text{vec}(\Delta X^*) = -(Q_L \otimes Q_R) E^{-1/2} \text{vec}( \msign(Q_L^TGQ_R^{}) ),
\end{equation}
where $E = \mathbb{E}[\mathcal{K}^{1/2}] \mathcal{K}^{-1/2}$ is a diagonal matrix of size $nm \times nm$.

To simplify the operation, we observe that the $k$-th diagonal entry of $E$, corresponding to the index pair $(i, j)$, is:
\begin{equation}
E_{kk} = \frac{\mathbb{E}[\sqrt{\lambda_i \mu_j}]}{\sqrt{\lambda_i \mu_j}}.
\end{equation}
Let $\sigma_i = \sqrt{\lambda_i}$ and $\sigma_j = \sqrt{\mu_j}$ denote the singular values of the gradient factors. We can represent the diagonal action of $E^{-1}$ on the vectorized matrix as an element-wise product (Hadamard product) in matrix form. We define the spectral adaptation matrix $\widetilde{E} \in \mathbb{R}^{n \times m}$ such that $\widetilde{E}_{ij} = E_{kk}$. Specifically:
\begin{equation}
\widetilde{E}_{ij} = \frac{\mathbb{E}[\sigma_i \sigma_j]}{\sigma_i \sigma_j}.
\end{equation}
Substituting this back into the matrix form of the update rule and apply the identity $(A \otimes B)\text{vec}(X) = \text{vec}(AXB^T)$ again yields:
\begin{equation}
\Delta X^* = -Q_L \left( \widetilde{E}^{-1/2} \odot (Q_L^T \msign(G) Q_R) \right) Q_R^T.
\end{equation}
This completes the proof.

\end{proof}

\subsection{Proof for \cref{prop:eigprop}}
\eigprop*
\begin{proof}
Let $Q_L = [u_1,\dots, u_n]$ and $Q_R = [v_1,\dots,v_m]$. The rotated gradient is given by $G'=Q_L^{T}GQ_{R}$, with entries $G'_{ij}=u_i^{T}Gv_j$. Thus $e_{i}^{T} (G' \odot G') 1_{m} =\sum_{k=1}^{m} (G'_{ik})^2=\sum_{k=1}^n(u_i^{T}GQ_Re_{k})^2=\norm{u_{i}^{T}GQ_R}^2_2=u_iGG^{T}u_i^{T}=\lambda_i$. The derivation for $\mu_j$ follows symmetrically.
\end{proof}

\section{Proofs for \cref{sec:analysis} }
\paragraph{Setup and notation.} Let $\{\xi\}_{t \ge 1}$ denote the sequence of i.i.d. mini-batch indices drawn during training wich size of $n_t$. Define the natural filtration
\begin{align*}
    \gF_0 : = \{\emptyset, \Omega\}, \quad \gF_{t}:=\sigma(\xi_1,\xi_2,\dots, \xi_t) \quad \text{for } t\ge 1.
\end{align*}
\subsection{Proof for \cref{thm:devalf}}
\label{appsec:proof_devalf}
\vecconv*
\begin{proof}
Using blockwise smooth assumption in \cref{ass:vec_lip} and broadcast $L_{v,i}$ in each block to form a $\R^d$ vector yields the following descent inequality
\begin{align*}
    f(x_{t+1}) &\le f(x_t) + \inner{\nabla f(x_t)}{ x_{t+1}-x_t} + \frac{1}{2} \sum_{i=1}^c L_{v,i} \norm{x_{t+1,i}-x_{t,i}}_2^2 \\
    &\le f(x_t) -\eta \sum_{i=1}^d  \gamma_{t,i} \nabla_i f(x_t)  \sign(g_t)_i + \frac{\eta^2}{2}\sum_{i=1}^{c}L_{v,i} \sum_{j\in \gB_i} \gamma_{t,j}^2. 
\end{align*}
Since we broadcast $L_{v,i}$ for all $j \in \gB_i$, we can push $L_{v,i}$ inside the inner sum, then the above inequality equivalents to 
\begin{align*}
    f(x_{t+1}) &\le f(x_t) -\eta \sum_{i=1}^d  \gamma_{t,i} \nabla_i f(x_t)  \sign(g_t)_i + \frac{\eta^2}{2} \sum_{i=1}^d L_{v,i} \gamma_{t,i}^2,
\end{align*}
using the sign mismatching trick used in \citep{bernstein2018signsgd}, we know $\nabla_i f(x_t)\sign(g_t)_i = |\nabla_i f(x_t)| - 2 |\nabla_i f(x_t) | \mathbb{I}[\sign(\nabla_i f(x_t)) \ne \sign(g_t)_i]$, plug this back into the above descent inequality yields:
\begin{align*}
    f(x_{t+1}) \le f(x_t) - \eta \sum_{i=1}^d |\nabla f(x_t)|_i \gamma_{t,i} + 2\eta \sum_{i=1}^d \gamma_{t,i} |\nabla f (x_t)|_i\mathbb{I}[\sign(\nabla_i f(x_t)) \ne \sign(g_t)_i] + \frac{\eta^2}{2} \sum_{i=1}^d L_i \gamma_{t,i}^2,
\end{align*}
Conditioned on $\gF_{t-1}$, we take the expectation over $\xi_t$ on both sides (we write $\E_{\xi_t}[\cdot|\gF_{t-1}]$ as $\E[\cdot]$ for short), and use the property of $\Pr[\sign(\nabla_i f(x_t)) \ne \sign(g_t)_i] \le \frac{\sigma_{v,t,i}}{|\nabla f(x_t)|_i}$, this yield
\begin{align*}
    f(x_{t+1}) &\le \E[f(x_t)] - \eta \sum_{i=1}^d |\nabla f(x_t)|_i \gamma_{t,i} + 2\eta \sum_{i=1}^d\gamma_{t,i} \sigma_{v,t,i}   +  \frac{\eta^2}{2} \sum_{i=1}^d L_{v,i} \gamma_{t,i}^2 \\ 
    &\le \E[f(x_t)] - \eta \sum_{i=1}^d |\nabla f(x_t)|_i \gamma_{t,i} + \frac{2 \eta}{\sqrt{T}} \sum_{i=1}^d\gamma_{t,i} \sigma_{v,i}   +  \frac{\eta^2}{2} \sum_{i=1}^d L_{v,i} \gamma_{t,i}^2 \\
    &\le \E[f(x_t)] - \eta \norm{\nabla f(x_t)}_{1,\gamma_t} + \frac{2 \eta}{\sqrt{T}} \norm{\sigma_{v}}_{1,\gamma_t}  +  \frac{\eta^2}{2} \norm{L_{v}}_{1, \gamma_t^2},
\end{align*}
where the second inequality comes from $\sigma_{v,t,i}^2 \le \sigma_{v,i} / T $ when $n_t=T$ and the final inequality comes from the definition of $\gamma$-weighted norms. We then take the expectation over all previous $\xi_t, t\le T$, and take the telescope sum up to $T$ gives us
\begin{align*}
\sum_{t=1}^{T}\eta \norm{f(x_t)}_{1, \gamma_t} &\le f(x_1) - \E[f(x_{T+1})] + \frac{2\eta}{\sqrt{T}}\sum_{t=1}^{T} \norm{\sigma_{v}}_{1, \gamma_t} + \frac{\eta^2}{2}\sum_{t=1}^{T}\norm{L_{v}}_{1, \gamma^2_t}, \\
\frac{1}{T}\sum_{t=1}^{T} \norm{f(x_t)}_{1, \gamma_t} &\le \frac{f(x_1) - \E[f(x_{T+1})]}{\eta T} +\frac{2}{\sqrt{T}}\norm{\sigma_{v}}_{1, \gammahat} + \frac{\eta}{2}\norm{L_{v}}_{1, \gammahatsq},
\end{align*}
If we let $\Delta = f(x_1) - \min f(x)$ and $\eta =\frac{1}{\sqrt{T}}$, we have 
\begin{align*}
    \frac{1}{T}\sum_{t=1}^{T} \norm{f(x_t)}_{1, \gamma_t} \le \frac{1}{\sqrt{T}} \Big (\Delta + 2\norm{\sigma_{v}}_{1, \gammahat} + \frac{1}{2} \norm{L_{v}}_{1, \gammahatsq}\Big).
\end{align*}

\end{proof}

\subsection{Proof for \cref{thm:devasf}}
\label{appsec:proof_devasf}
\matconv*

\begin{proof}
The $L$-smooth assumption in \cref{ass:matrix_lip} yields the following descent inequality:
    \begin{align*}
        f(X_{t+1}) &\le f(X_t) + \inner{\nabla f(X_t)}{X_{t+1}-X_t} + \frac{1}{2} \norm{\sqrt{L_m}\odot (X_{t+1}-X_t)}_{F}^2 \\
        &= f(X_t) + \inner{ \nabla f(X_t)}{-\eta \Gamma_{t} \odot \msign (G_t)} + \frac{\eta^2}{2} \norm{\sqrt{L_m}\odot \Gamma_t \odot \msign(G_t)}_{F}^2,
    \end{align*}
    using $\norm{A \odot B}_F \le \norm{A}_F \norm{B}_F$, $\inner{A\odot B}{C}= \inner{B}{A\odot C}$, and $|\msign(G_t)_{ij}| = |\sum_{k=1}^{r} U_{ik} V_{jk}| \le \norm{U_{i,:}}_2 \cdot \norm{V_{j,:}}_2 \le 1$, we have,
    \begin{align*}
        f(X_{t+1}) &\le f(X_t) +  \inner{\Gamma_{t} \odot \nabla f(X_t)}{-\eta \msign (G_t)} + \frac{\eta^2}{2} \sum_{ij} L_{m,ij} \Gamma_{t,ij}
    \end{align*}
    observe that, in $\ell_{1}$ norm, we have $\norm{L_m\odot \Gamma_t^2 }_{\ell_1}=\norm{L_m}_{\ell_1,\Gamma_t^2}$, this yields
    \begin{align*}
        f(X_{t+1}) &\le f(X_t) +  \inner{\Gamma_{t} \odot \nabla f(X_t)}{-\eta \msign (G_t)} + \frac{\eta^2}{2} \norm{L_m}_{\ell, \Gamma_t^2}.
    \end{align*}
    now we let $\nabla f(X_t)= G_t - E_t$. $E_t$ is the error matrix and $\E_{\xi_t}[E_t | \gF_{t-1}]=0$ by the unbiased gradient assumption, plug this into the above equation, we have
    \begin{align*}
        f(X_{t+1}) &\le f(X_t) +  \inner{\Gamma_{t} \odot (G_t - E_t)}{-\eta \msign (G_t)} + \frac{\eta^2}{2} \norm{L_m}_{\ell, \Gamma_t^2}\\
        &\le f(X_t) + \inner{\Gamma_{t} \odot G_t}{-\eta \msign (G_t)} + \inner{\Gamma_{t} \odot E_t }{\eta \msign (G_t)} + \frac{\eta^2}{2} \norm{L_m}_{\ell, \Gamma_t^2}  \\
        &\le f(X_t) - \eta \inner{G_t}{\msign(G_t)}_{\Gamma_t} + \inner{\Gamma_{t} \odot E_t }{\eta \msign (G_t)} + \frac{\eta^2}{2} \norm{L_m}_{\ell, \Gamma_t^2}\\
        &\le f(X_t) - \eta\inner{G_t}{\msign(G_t)}_{\Gamma_t} + \eta \norm{\Gamma_t \odot E_t}_1 +\frac{\eta^2}{2} \norm{L_m}_{\ell, \Gamma_t^2},
    \end{align*}
    where the final inequality comes from the Holder inequality. Condition on $\gF_{t-1}$, now we take the expectation over $\xi_t$ on both sides (we write $\E_{\xi_t}[\cdot|\gF_{t-1}]$ as $\E[\cdot]$ for short), this yields
    \begin{align*}
        f(X_{t+1}) &\le \E[f(X_t)] - \eta \E[ \inner{G_t}{\msign(G_t)}_{\Gamma_t}  ] + \eta \E[\norm{\Gamma_t \odot E_t}_1] + \frac{ \eta^2}{2}\E[ \norm{L_m}_{\ell_{1},\Gamma_t^2}] \\
        &=\E[f(X_t)] - \eta \E[ \inner{G_t}{\msign(G_t)}_{\Gamma_t}  ] + \eta \E[\norm{\Gamma_t \odot E_t}_1] + \frac{ \eta^2}{2} \norm{L_m}_{\ell_{1},\Gamma_t^2}
    \end{align*}
     where the final equality uses $\{\Gamma_t\}$ is $\gF_{t-1}$ measurable. Now we want to study the error term  $\E[\norm{\Gamma_t \odot E_t}_1]$, we know the nuclear norm is upper bounded by the Frobenius norm times the square root of the rank of the matrix, now we have
    \begin{align*}
        \E[\norm{\Gamma_t \odot E_t}_1] \le \sqrt{r} \E[\norm{\Gamma_t \odot E_t}_F]
        \le \sqrt{r}\sqrt{\E[\norm{\Gamma_t \odot E_t}_F^2]} = \sqrt{r}\sqrt{\E[\sum_{i=1}^n\sum_{j=1}^m\Gamma_{t,ij}^2E_{t,ij}^2]},
    \end{align*}
    where is second inequality comes from the concavity of $\sqrt{X}$. Under the linearity of expectation and $\gF_{t-1}$-measurability of $\Gamma_t$, we have $\E[\sum_{i=1}^n\sum_{j=1}^m\Gamma_{t,ij}^2E_{t,ij}^2]=\sum_{i=1}^n\sum_{j=1}^m\Gamma_{t,ij}^2\sigma_{t,ij}^2$ (Here we simplify $\sigma_m$ to $\sigma$ for brevity). When $n_t=T$, we further have $\sigma_{t,ij}^2\le \sigma_{ij}^2/T$ and this gives us $\E[\norm{\Gamma_t \odot E_t}_1] \le \sqrt{\frac{r}{T}}\sqrt{\sum_{i=1}^n\sum_{j=1}^m \Gamma_{t,ij}^2\sigma_{ij}^2} = \sqrt{\frac{r}{T}} \norm{\sigma_m}_{F,\Gamma_t} $. Plug this back into the previous descent inequality, and it yields
    \begin{align*}
       f(X_{t+1}) \le \E[f(X_t)] - \eta \E[ \inner{G_t}{\msign(G_t)}_{\Gamma_t}  ] + \eta \sqrt{\frac{r}{T}} \norm{\sigma_m}_{F,\Gamma_t} + \frac{ \eta^2}{2} \norm{L_m}_{\ell_{1},\Gamma_t^2}.
    \end{align*}
    Take the expectation over previous $\xi_t, t\le T$ and apply the tower rule, and take the telescope sum of the above descent inequality up to $T$, which gives us
   \begin{align*}
\sum_{t=1}^{T}\eta \psi(G_t) &\le f(X_1) - \E[f(X_{T+1})] + \eta\sqrt{\frac{r}{T}}\sum_{t=1}^{T} \norm{\sigma_m}_{F,\Gamma_t}  + \frac{ \eta^2}{2}\sum_{t=1}^{T}\norm{L_m}_{\ell_{1},\Gamma_t^2}. \\
\frac{1}{T}\sum_{t=1}^{T} \psi(G_t) &\le \frac{f(X_1) - \E[f(X_{T+1})]}{\eta T} +\sqrt{\frac{r}{T}} \norm{\sigma_m}_{F,\Gammahat} +  \frac{\eta}{2}\norm{L_m}_{\ell_{1}, \Gammahatsq},
\end{align*}
If we let $\Delta = f(X_1) - \min f(X)$ and $\eta =\frac{1}{\sqrt{T}}$, we have 
\begin{align*}
    \frac{1}{T}\sum_{t=1}^{T} \psi(G_t) \le \frac{1}{\sqrt{T}} \Big (\Delta + \sqrt{r}\norm{\sigma_m}_{F,\Gammahat} + \frac{1}{2} \norm{L_m}_{\ell_{1},\Gammahatsq}\Big).
\end{align*}
\end{proof}

\newpage
\section{Experiment Details on Modded NanoGPT}
\label{appsec:modded_nanogpt}
\paragraph{Experiment Setup.} In our NanoGPT experiments, we utilized the May 27 snapshot of modded-nanogpt \citep{modded_nanogpt_2024}, which incorporates several modern architectural enhancements: rotary embeddings, RMSNorm, a linear decay scheduler, and $\text{ReLU}^2$ activations. This version also features an untied output head and three additional value embedding layers appended to the early transformer layer outputs. Each embedding block consists of approximately 38M parameters ($768 \times 50,257$); these four blocks total 152M parameters, resulting in a model size of 276M parameters. We train our model on the FineWeb-10B dataset \citep{penedo2024fineweb}. To stabilize training, we incorporate a sign-trick similar to that proposed in \citep{si2025adamuon} within the $\text{msign}$ function. We employ a warmup phase spanning 60\% of the total steps (at a fixed learning rate) for the linear decay schedule. The optimizer hyperparameters in \cref{fig:nano_val} and \cref{fig:nano_gpt_loss} are detailed in \cref{tab:hyper_nanogpt}.
\begin{table}[ht]
\centering
\caption{Hyperparameter configurations for the various optimizers evaluated in \cref{fig:nano_val} and \cref{fig:nano_gpt_loss}. For methods employing Nesterov momentum, the acceleration term is calculated as $\sign(\beta_1 m_{t} + (1-\beta_1) g_t)$ or $\msign(\beta_1 m_{t} + (1-\beta_1) g_t)$, where $m_t$ is the moving average and $g_t$ is the current gradient.}
\label{tab:hyper_nanogpt}
\begin{tabularx}{\textwidth}{l XXXXXX} 
\toprule
\textbf{Hyperparameter} & \textbf{Adam} & \textbf{Lion} & \textbf{MUON} & \textbf{SOAP} & \textbf{Adamuon} & \textbf{DeVA$_{S_{\infty}}$}   \\ 
\midrule
Learning Rate ($\eta$) & 0.001 & 0.001 & 0.001 & 0.001 & 0.001 & 0.001 \\
Weight Decay           & 0.1 & 0.1 & 0.1 & 0.1 & 0.1 & 0.1 \\
Batch Size (tokens)             & 49152   & 49152 & 49152 & 49152   & 49152 & 49152    \\
$\beta_1$ (First Moment)  & 0.9 & 0.95 & 0.95 & 0.95 & 0.95   & 0.95     \\
$\beta_2$ (Second Moment) & 0.99 & 0.95 &0.95 &0.95 &  0.95 & 0.95  \\
$\beta_3$ (Shampoo Moment) & -- & -- & 0.95 &  0.95 & -- & 0.95  \\
$\epsilon$ (Stability)    & $10^{-8}$ & $10^{-8}$ & $10^{-8}$ & $10^{-8}$ & $10^{-8}$ & $10^{-8}$ \\
\texttt{eigdecomp} Frequency   & -- & -- & -- & 10 & -- & 10 \\
Nesterov Acceleration   & False & True & False & False & False & False \\
\bottomrule
\end{tabularx}
\end{table}

\paragraph{Hyperparameter Robustness.} We further evaluate the hyperparameter robustness of $\devasf$ with respect to the learning rate and $\beta_3$ in \cref{fig:lr-sweep}, \cref{tab:lr-sweep}, and \cref{tab:beta3}. As shown in the learning-rate sweep results in \cref{fig:lr-sweep} and \cref{tab:lr-sweep}, $\devasf$ outperforms both SOAP and Muon when the learning rate is set to $10^{-3}$. Importantly, we also find that $\devasf$ is robust to the choice of $\beta_3$: all tested values of $\beta_3$ achieve validation losses lower than the best result obtained by SOAP (3.2769).

\begin{figure}[htbp]
  \centering
  \includegraphics[width=0.72\textwidth]{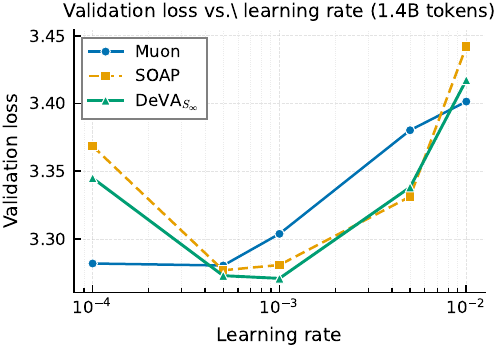}
  \caption{\textbf{Learning-rate sensitivity (1.4B tokens).} Validation loss versus learning rate for Muon, SOAP, and DeVA$_{S_{\infty}}$ under a fixed token budget.}
  \label{fig:lr-sweep}
\end{figure}

\begin{table}[htbp]
  \centering
  \caption{Numerical learning-rate sweep (1.4B tokens).}
  \label{tab:lr-sweep}
  \begin{tabular}{lccccc}
    \toprule
    Optimizer & LR $=10^{-4}$ & LR $=5\times10^{-4}$ & LR $=10^{-3}$ & LR $=5\times10^{-3}$ & LR $=10^{-2}$ \\
    \midrule
    Muon & 3.2819 & \textbf{3.2804} & 3.3038 & 3.3803 & 3.4015 \\
    SOAP & 3.3686 & \textbf{3.2769} & 3.2808 & 3.3313 & 3.4422 \\
    DeVA$_{S_{\infty}}$ & 3.3450 & 3.2729 & \textbf{3.2709} & 3.3382 & 3.4174 \\
    \bottomrule
  \end{tabular}
\end{table}

\begin{table}[htbp]
  \centering
  \caption{\textbf{Robustness of DeVA$_{S_{\infty}}$ to $\beta_3$.} Validation loss when varying $\beta_3$ with the learning rate fixed to $10^{-3}$ (1.4B-token setting; same evaluation as above). The bold entry is the lowest loss in this $\beta_3$ grid and corresponds to the configuration we emphasize in the rebuttal.}
  \label{tab:beta3}
  \begin{tabular}{lccccc}
    \toprule
    $\beta_3$ & 0.8 & 0.9 & 0.95 & 0.975 & 0.9875 \\
    \midrule
    Validation loss & 3.2719 & \textbf{3.2687} & 3.2743 & 3.2749 & 3.2723 \\
    \bottomrule
  \end{tabular}
\end{table}

\newpage

\section{Experiment Details on ImageNet-1K}
\label{appsec:imagenet-1k}
\paragraph{Experiment Setup.} For our ImageNet experiments, we train a transformer model ViT-L/16 on the ImageNet-1K dataset from ILSVRC2012 \citep{deng2009imagenet}. We compare convergence behavior using early stopping within a five-epoch window and no learning rate decay. To stabilize training, we incorporate a sign-trick similar to that proposed in \citep{si2025adamuon} within the $\text{msign}$ function. We observe that a sign-stabilized $\msign$ update is beneficial for image classification tasks. The hyperparameter configurations corresponding to \cref{fig:imagenet_loss} are detailed in \cref{tab:hyper_VIT}.
\begin{table}[ht]
\centering
\caption{Hyperparameter configurations for the various optimizers evaluated in \cref{fig:imagenet_loss}. For methods employing Nesterov momentum, the acceleration term is calculated as $\sign(\beta_1 m_{t} + (1-\beta_1) g_t)$ or $\msign(\beta_1 m_{t} + (1-\beta_1) g_t)$, where $m_t$ is the moving average and $g_t$ is the current gradient.}
\label{tab:hyper_VIT}
\begin{tabularx}{\textwidth}{l XXXXX} 
\toprule
\textbf{Hyperparameter} & \textbf{Adam} & \textbf{MUON} & \textbf{SOAP} & \textbf{Adamuon} & \textbf{DeVA$_{S_{\infty}}$} \\ 
\midrule
Learning Rate ($\eta$) & 0.001 & 0.001 & 0.001 & 0.001 & 0.001 \\
Weight Decay           & 0.1 & 0.1 & 0.1 & 0.1 & 0.1 \\
Batch Size (imgs)             & 512   & 512    & 512 & 512 & 512    \\
$\beta_1$ (First Moment)  & 0.9 & 0.95   & 0.95 & 0.95 & 0.95   \\
$\beta_2$ (Second Moment) & 0.999 & 0.95 &  0.95 & 0.95 & 0.95  \\
$\beta_3$ (Shampoo Moment) & -- & -- &  0.95 & -- & 0.95  \\
$\epsilon$ (Stability)    & $10^{-8}$ & $10^{-8}$ & $10^{-8}$ & $10^{-8}$ & $10^{-8}$ \\
\texttt{eigdecomp} Frequency   & -- & -- & 10 & -- & 10 \\
Nesterov Acceleration   & False & True & False & False & True \\
\bottomrule
\end{tabularx}
\end{table}

\newpage

\section{Experiment Details on CIFAR-10}
\label{appsec:cifar-10}
\paragraph{Experiment Setup.} For our CIFAR-10 experiments, we compare the validation accuracy of ResNet-20 over 40 epochs using a linear learning rate decay without any warm-ups. The hyperparameter configurations for each optimizer—selected based on peak validation accuracy in \cref{fig:cifar10_optimal_lrs}—are provided in \cref{tab:hyper_resnet}.

\begin{table}[ht]
\centering
\caption{Best Hyperparameter settings across different optimizers on ResNet-20 in \cref{fig:cifar10_optimal_lrs}. For methods employing Nesterov momentum, the acceleration term is calculated as $\sign(\beta_1 m_{t} + (1-\beta_1) g_t)$ or $\msign(\beta_1 m_{t} + (1-\beta_1) g_t)$, where $m_t$ is the moving average and $g_t$ is the current gradient.}
\label{tab:hyper_resnet}
\begin{tabularx}{\textwidth}{l XXXXXXX} 
\toprule
\textbf{Hyperparameter} & \textbf{Adam} & \textbf{Signum} & \textbf{DeVA$_{\ell_{\infty}}$}  &  \textbf{MUON} & \textbf{SOAP} &  \textbf{Adamuon} & \textbf{DeVA$_{S_{\infty}}$} \\ 
\midrule
Learning rate ($\eta$) & 0.05 & 0.05 & 0.05 & 0.001 & 0.001 & 0.001 & 0.001 \\ 
Weight Decay           & 0.1 & 0.1 & 0.1 & 0.1 & 0.1 & 0.1 & 0.1 \\
Batch Size (imgs)             & 128   & 128    & 128 & 128 & 128 & 128 & 128   \\
$\beta_1$ (First Moment)  & 0.9 & 0.9 & 0.9 & 0.95   & 0.95 & 0.95 & 0.95    \\
$\beta_2$ (Second Moment) & 0.999 & 0.999 & 0.999 & 0.95 & 0.95 & 0.95 & 0.95  \\
$\beta_3$ (Shampoo Moment) & -- & -- & -- & -- & 0.95 &  -- & 0.95  \\
$\epsilon$ (Stability)    & $10^{-8}$ & $10^{-8}$ & $10^{-8}$ & $10^{-8}$ & $10^{-8}$ & $10^{-8}$ & $10^{-8}$  \\
\texttt{eigdecomp} Frequency   & -- & -- & -- & -- & 5 & -- &  5 \\
Nesterov Acceleration   & False & False & False & False & False & False & False \\
\bottomrule
\end{tabularx}
\end{table}

\newpage
\section{Additional Results}
We summarize the per-step computational and memory costs of the matrix optimizers Muon, SOAP, and $\devasf$ in \cref{tab:complexity} of \cref{appsec:mem_run}. The main runtime bottleneck of $\devasf$ is the additional eigendecomposition step, while the memory bottleneck comes from storing the covariance matrix and second-order moment matrices. In \cref{appsec:abl}, we discuss how to reduce the runtime complexity by increasing the batch size and decreasing the eigendecomposition frequency. In \cref{appsec:second_ord}, we investigate whether a reduced second-order moment estimator can achieve performance comparable to the original $\devasf$. Finally, in \cref{appsec:snr}, we present the evolution of the signal-to-noise ratio (SNR) across different layers in \cref{tab:snr}, observing that the SNR peaks during the early stage of training and remains stable as training progresses.

\subsection{Memory and runtime overhead.}
\label{appsec:mem_run}
\begin{table}[htbp]
  \centering
  \caption{\textbf{Asymptotic per-step cost (matrix shape $n\times m$).} Memory footprint and floating-point work per optimizer step, using the same symbols and conventions as the main text. The symbol \emph{freq} denotes how often expensive subroutines (for example, eigendecompositions) are invoked relative to the baseline step; these terms appear explicitly so that amortization over an inner loop is not hidden. The last column identifies the dominant subroutine for each method. This table is included to make the complexity discussion in our response self-contained.}
  \label{tab:complexity}
  \resizebox{\textwidth}{!}{
  \begin{tabular}{llll}
    \toprule
    Optimizer & Memory & FLOPs per step & Dominant subroutine \\
    \midrule
    Muon & $\mathcal{O}(5nm^2)$ & $\mathcal{O}(3nm)$ & Newton--Schulz iteration \\
    SOAP & $\mathcal{O}\left(n^2m + \frac{n^3 + m^3}{\text{freq}}\right)$ & $\mathcal{O}(n^2 + m^2 + 4nm)$ & Eigendecomposition; covariance \\
    DeVA$_{S_{\infty}}$ & $\mathcal{O}\left(n^2m + 5nm^2 + \frac{n^3 + m^3}{\text{freq}}\right)$ & $\mathcal{O}(n^2 + m^2 + 4nm)$ & Eigendecomposition; covariance; N--S \\
    \bottomrule
  \end{tabular}}
\end{table}

\subsection{Ablations}
\label{appsec:abl}

\begin{figure}[h]
    \centering
    \includegraphics[width=0.95\linewidth]{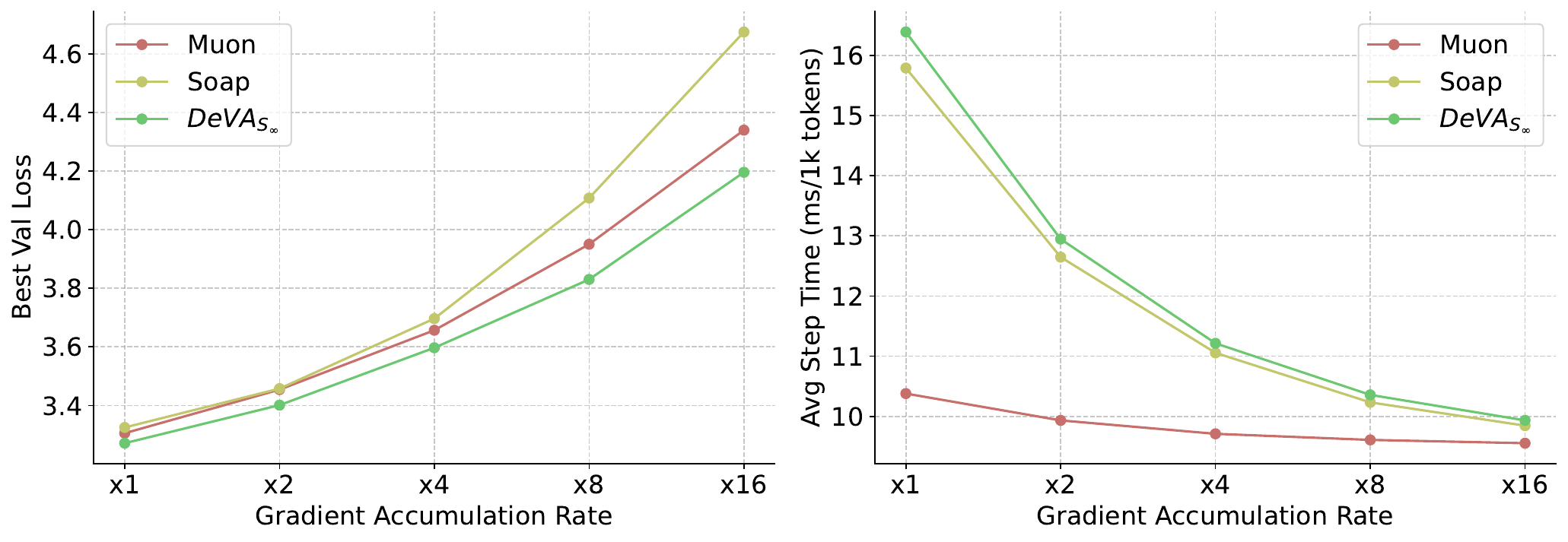}
    \caption{Batch size sensitivity on NanoGPT (274M). The generalization performance of $\devasf$ remains robust across larger batch sizes. Furthermore, the performance gap narrows the difference in average runtime between Muon and $\devasf$ at higher batch scales.}
    \label{fig:abl_gcc}
\end{figure}

\paragraph{Different batch sizes} We evaluate the impact of batch size by varying the number of gradient accumulation steps. As shown in \cref{fig:abl_gcc}, validation loss increases across all optimizers as the gradient accumulation rate rises; notably, SOAP exhibits the most significant performance degradation. Most importantly, $\devasf$ consistently achieves the lowest validation loss across all accumulation settings. Muon performs competitively, generally maintaining a performance profile between $\devasf$ and SOAP.

Regarding computational efficiency, Muon remains the fastest optimizer in terms of average step time across all accumulation levels. For all methods, average step time decreases as the accumulation rate increases, likely due to the amortization of optimizer overhead (such as \texttt{eigdecomp} and $\text{msign}$) over a larger number of micro-batches. While $\devasf$ and SOAP incur higher overhead than Muon at low accumulation rates (e.g., $\text{rate}=1$), this gap narrows significantly as the accumulation rate increases.

\begin{figure}[h]
    \centering
    \includegraphics[width=0.95\linewidth]{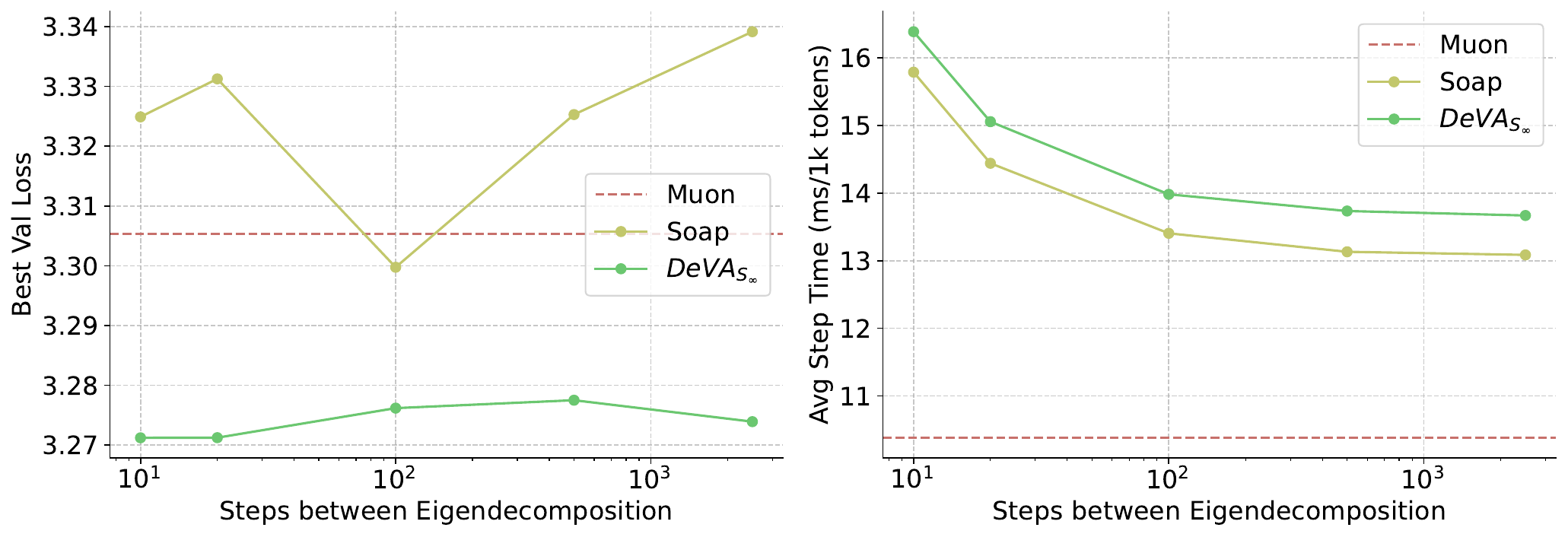}
    \caption{Sensitivity to eigen-decomposition frequency on NanoGPT (274M). $\devasf$ maintains consistently low validation loss even as the frequency of eigen-decomposition is reduced, demonstrating its robustness to stale updates. }
    \label{fig:abl_freq}
\end{figure}

\paragraph{Robustness to eigen-decomposition frequency.} As illustrated in \cref{fig:abl_freq}, $\devasf$ exhibits remarkable stability; its validation loss remains consistently lower than that of SOAP, fluctuating only minimally (around $3.27$) even as the eigen-decomposition frequency is significantly reduced. While SOAP maintains a slightly higher validation loss (between $3.30$ and $3.34$), it also demonstrates relative resilience to less frequent updates.

There is a pronounced efficiency gain as the eigen-decomposition interval increases. Both optimizers experience a sharp reduction in average step time when moving from a frequency of 10 to 100 steps, with diminishing returns for higher intervals. This confirms that reducing the frequency of the eigen-decomposition step significantly lowers computational overhead without substantially compromising model convergence.

\subsection{Second Moment Accumulation in $\devasf$}
\label{appsec:second_ord}
\begin{figure}[h]
    \centering
    \includegraphics[width=0.95\linewidth]{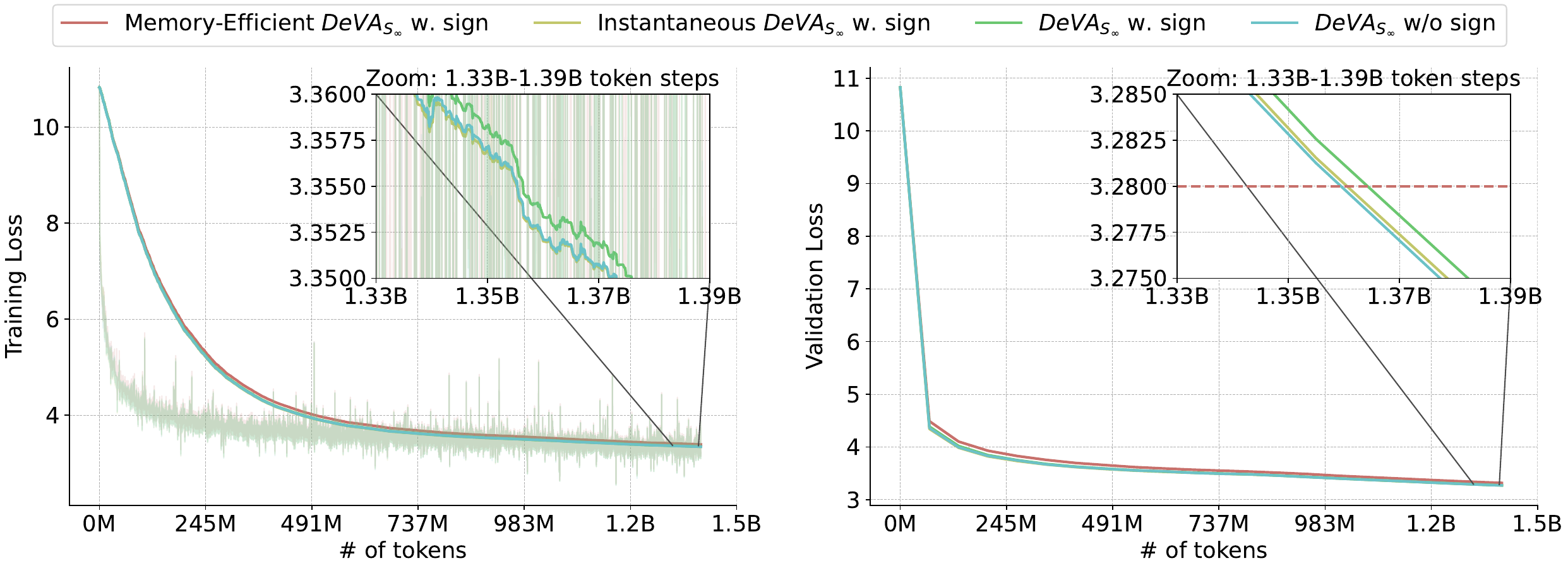}
    \caption{Training and Validation Performance of different $\devasf$ variants on NanoGPT (274M).}
    \label{fig:sec_mom}
\end{figure}

In this study, we evaluate two variants of $\devasf$ distinguished by their second-moment accumulation strategies:

\begin{itemize}

\item \textbf{Variant (a) (Memory-Efficient $\devasf$, \cref{alg:deva_sinfty_eff}):} Following the approach in Adafactor \citep{shazeer2018adafactor}, which achieves sublinear memory complexity by separately accumulating the squared row and column sums of the gradient $G_t$, we develop a memory-efficient second-order moment for $\devasf$. By independently accumulating row sums $r_t$ and column sums $c_t$ (Lines 14--15, \cref{alg:deva_sinfty_eff}), we reduce the memory overhead of the second-order moments from $O(nm)$ to $O(n+m)$.

\item \textbf{Variant (b) (Instantaneous $\devasf$, \cref{alg:deva_sinfty_ins}):} 
In contrast to calculating the second-order moments $r_t$ and $c_t$ by EMA of $G_t'$, we introduce an Adam-style variant that calculates them based on the sample gradient $G_t'$ (Lines 12--13).
\end{itemize}
Beyond these accumulation strategies, we also explore the impact of the sign-stabilization trick within the $\text{msign}$ function. While previously shown to benefit ImageNet classification, we evaluate its general utility on language modeling. We append the suffixes \textit{w. sign} and \textit{w/o sign} to $\devasf$ to denote whether the $\text{msign}$ operator includes this stabilization.

\begin{table}[h]
    \centering
    \caption{Best Validation loss across $\devasf$ variants on NanoGPT (274M).}

    \begin{tabularx}{\linewidth}{l XXXX}
    \toprule
         & \mbox{$\devasf$ \textit{w. sign}} & \mbox{$\devasf$\textit{ w/o sign}} &  \mbox{Eff. $\devasf$ \textit{w. sign}} & \mbox{Inst. $\devasf$ \textit{w. sign}} \\ \midrule
     Best Val Loss    & 3.271 & 3.269 & 3.309 & 3.270\\   
     \bottomrule
    \end{tabularx}

    \label{tab:sec_mom}
\end{table}

The training and validation curves are illustrated in \cref{fig:sec_mom}, with detailed validation results summarized in \cref{tab:sec_mom}. Our results demonstrate that employing sublinear memory for second-order momentum incurs only a minimal performance degradation (from 3.271 to 3.309). Furthermore, Instantaneous $\devasf$ yields a marginal improvement over the baseline $\devasf$. Notably, in contrast to common findings in image classification tasks, sign-stabilized $\devasf$ adversely affects the final validation performance.

\begin{algorithm}[!h]
\caption{Memory-Efficient $\devasf$}\label{alg:deva_sinfty_eff}
\begin{algorithmic}[1]
\INPUT $\beta_1, \beta_2, \beta_3, freq$, $\epsilon, T$, $\eta_t$ for $t=1,...,T$
\OUTPUT $X_T \in \R^{n \times m}$
\STATE Initialize $L_0 \in \R^{n \times n}, R_0 \in \R^{m \times m}, M_0 \in \R^{n \times m}, V_{r,0} \in \R^{n}, V_{c,0} \in \R^{m} $
\STATE Randomly initialize $X_1 \in \R^{n \times m}$
\FOR{$t=1,...,T$}
\STATE $G_{t} = \nabla f(X_{t},\xi_{t}) $
\STATE $L_t = \beta_3 L_{t-1} + (1-\beta_3) G_tG_t^{T}$
\STATE $R_t=\beta_3 R_{t-1} + (1-\beta_3)G_t^{T}G_t$
\IF{$t \mod freq == 0$} 
\STATE $Q_L,Q_R = \texttt{eigdecomp}(L_t), \texttt{eigdecomp}(R_t)$ 
\ENDIF
\STATE $G_t'=Q_L^{T}G_tQ_R$
\STATE $M_{t}= \beta_1 M_{t-1} + (1-\beta_1)G_{t}' $
\STATE $r_t = \left( \sqrt{\sum_{j=1}^m (M_t)_{ij}^2} \right)_{i=1}^n$ 
\STATE $c_t = \left( \sqrt{\sum_{i=1}^n (M_t)_{ij}^2} \right)_{j=1}^m$
{\color{blue}\STATE $V_{r,t} = \beta_2 V_{r,t-1} + (1-\beta_2) r_t$ 
\STATE $V_{c,t} = \beta_2 V_{c,t-1} + (1-\beta_2) c_t$ 
\STATE $\Gamma_t = \left(V_{r,t}V_{c,t}^{T} \oslash (r_t c_t^{T} +\epsilon)\right)^{-1/2}$}
\STATE $D_t =\Gamma_t \odot \msign(M_t) \odot  0.2 \sqrt{\max (n,m)}$
\STATE $X_{t+1} = X_t - \eta_t D_t$
\ENDFOR 
\end{algorithmic}
\end{algorithm}

\begin{algorithm}[!h]
\caption{Instantaneous $\devasf$}\label{alg:deva_sinfty_ins}
\begin{algorithmic}[1]
\INPUT $\beta_1, \beta_2, \beta_3, freq$, $\epsilon, T$, $\eta_t$ for $t=1,...,T$
\OUTPUT $X_T \in \R^{n \times m}$
\STATE Initialize $L_0 \in \R^{n \times n}, R_0 \in \R^{m \times m}, M_0 \in \R^{n \times m}, V_{r,0} \in \R^{n}, V_{c,0} \in \R^{m} $
\STATE Randomly initialize $X_1 \in \R^{n \times m}$
\FOR{$t=1,...,T$}
\STATE $G_{t} = \nabla f(X_{t},\xi_{t}) $
\STATE $L_t = \beta_3 L_{t-1} + (1-\beta_3) G_tG_t^{T}$
\STATE $R_t=\beta_3 R_{t-1} + (1-\beta_3)G_t^{T}G_t$
\IF{$t \mod freq == 0$} 
\STATE $Q_L,Q_R = \texttt{eigdecomp}(L_t), \texttt{eigdecomp}(R_t)$ 
\ENDIF
\STATE $G_t'=Q_L^{T}G_tQ_R$
\STATE $M_{t}= \beta_1 M_{t-1} + (1-\beta_1)G_{t}' $
\STATE $r_t = \left( \sqrt{\sum_{j=1}^m ({\color{blue} G'_t})_{ij}^2} \right)_{i=1}^n$ 
\STATE $c_t = \left( \sqrt{\sum_{i=1}^n ({\color{blue}G'_t})_{ij}^2} \right)_{j=1}^m$
\STATE $V_{t} = \beta_2 V_{t-1} + (1-\beta_2) r_t c_t^{T}$ 
\STATE $\Gamma_t = \left(V_{t} \oslash (r_t c_t^{T} +\epsilon)\right)^{-1/2}$
\STATE $D_t =\Gamma_t \odot \msign(M_t) \odot  0.2 \sqrt{\max (n,m)}$
\STATE $X_{t+1} = X_t - \eta_t D_t$
\ENDFOR 
\end{algorithmic}
\end{algorithm}

\subsection{Signal-to-noise ratio during pretraining}
\label{appsec:snr}

\begin{table}[htbp]
  \centering
  \caption{\textbf{SNR matrix evolution (Frobenius norm).} Frobenius norm of the estimated signal-to-noise ratio (SNR) matrix at representative early- and late-depth layers and at the listed pretraining checkpoints. 
  \textbf{Boldface} in each row is the largest norm observed for that layer across the five token counts. The norms generally shift quickly within the first few million tokens, consistent with rapid reorganization of gradient statistics while representations are still forming. Thereafter, the norms vary within a comparatively narrow band, consistent with an approximately stationary signal-to-noise balance in late training.}
  \label{tab:snr}

  \begin{tabular}{lccccc}
    \toprule
    Layer & 5M tokens & 10M tokens & 50M tokens & 100M tokens & 500M tokens \\
    \midrule
    \texttt{block.0.mlp.c\_proj} & \textbf{1973.9} & 1634.4 & 1534.6 & 1635.8 & 1652.0 \\
    \texttt{block.0.attn.c\_proj} & \textbf{969.5} & 910.7 & 753.4 & 847.5 & 854.8 \\
    \texttt{block.11.mlp.c\_proj} & 499.1 & \textbf{2367.3} & 1651.1 & 1673.4 & 1747.4 \\
    \texttt{block.11.attn.c\_proj} & 373.4 & \textbf{1210.6} & 781.2 & 817.7 & 894.7 \\
    \bottomrule
  \end{tabular}

\end{table}


\end{document}

%% file: commands.tex
\def\eqref#1{equation~\ref{#1}}

\def\1{\bm{1}}

\DeclareMathAlphabet{\mathsfit}{\encodingdefault}{\sfdefault}{m}{sl}
\SetMathAlphabet{\mathsfit}{bold}{\encodingdefault}{\sfdefault}{bx}{n}

\def\gB{{\mathcal{B}}}

\def\gF{{\mathcal{F}}}

\def\gH{{\mathcal{H}}}

\newcommand{\E}{\mathbb{E}}

\newcommand{\R}{\mathbb{R}}

\newcommand{\Var}{\mathrm{Var}}

\DeclareMathOperator*{\argmin}{\text{arg\,min}}
\DeclareMathOperator*{\eig}{\text{eig}}

\renewcommand{\epsilon}{\varepsilon}

\DeclareMathOperator{\sign}{\text{sign}}
\DeclareMathOperator{\msign}{\text{msign}}
\DeclareMathOperator{\fvec}{\text{vec}}
\DeclareMathOperator{\diag}{\text{diag}}
\DeclareMathOperator{\deva}{\text{DeVa}}
\DeclareMathOperator{\devasf}{\text{DeVA}_{S_{\infty}}}
\DeclareMathOperator{\devalf}{\text{DeVA}_{\ell_{\infty}}}

\DeclareMathOperator{\ema}{\text{EMA}}
\DeclareMathOperator{\Tr}{\text{Tr}}
\DeclareMathOperator{\gammahat}{\hat{\gamma}}
\DeclareMathOperator{\gammahatsq}{\hat{\gamma}^2}
\DeclareMathOperator{\Gammahat}{\hat{\Gamma}}
\DeclareMathOperator{\Gammahatsq}{\hat{\Gamma}^2}

\global\long\def\norm#1{\|#1\|}%
\global\long\def\inner#1#2{\langle#1,#2\rangle}%